\let\ref\Cref
\crefname{appendix}{}{}
\DeclareMathOperator{\Diag}{diag}
\DeclareMathOperator{\Vectorize}{vec}
\DeclarePairedDelimiter{\norm}{\lVert}{\rVert}
\DeclarePairedDelimiter\abs{\lvert}{\rvert}
\DeclarePairedDelimiterX{\set}[2]{\{}{\}}{\,{#1}:{#2}\,}
\newcommand{\sequence}[2][n]{\left\{\cramped{#2}\right\}_{{#1}=1}^{\infty}}
\newcommand{\entrydivide}{\oslash}
\newcommand{\entrydot}{\odot}
\newcommand{\indicator}[1]{\mathbf{1}_{\{#1\}}}
\newcommand{\numset}[1]{\ensuremath{\mathbb{#1}}}
\newcommand{\prob}{\mathbb{P}}
\newcommand{\given}{\mid}
\newcommand{\loglikelihood}{\ell}
\newcommand{\Ploglikelihood}{\Tilde{\ell}}
\newcommand{\Loss}{\text{Loss}}
\newcommand{\Normal}[1][(\mu,\cramped{\sigma^2})]{\textsf{\upshape N}#1}
\newcommand{\AL}[1][(m,\kappa,\lambda)]{\textsf{\upshape AL}#1}
\newcommand{\Qts}[1][\vec{\theta}]{\operatorname{Q}_{#1}}
\newcommand{\Qtsn}[1][\vec{\theta}]{\hat{\operatorname{Q}}_{#1}}
\newcommand{\Qtse}[1]{\operatorname{Q}_{\theta_{#1}}}
\DeclarePairedDelimiterX{\ointerval}[2]{(}{)}{{#1},{#2}}
\DeclarePairedDelimiterX{\cinterval}[2]{[}{]}{{#1},{#2}}
\DeclarePairedDelimiterX{\ocinterval}[2]{(}{]}{{#1},{#2}}
\DeclarePairedDelimiterX{\cointerval}[2]{[}{)}{{#1},{#2}}
\newcommand{\diff}{\mathop{}\!\mathrm{d}}
\let\oldpartial\partial
\renewcommand{\partial}{\mathop{}\!\oldpartial}
\newcommand{\gradient}{\nabla}
\newcommand{\alms}{\text{a.s.}}
\newcommand{\almscon}{\overset{\alms}{\rightarrow}}
\renewcommand{\ge}{\geqslant}
\renewcommand{\le}{\leqslant}
\newtheorem{thm}{Theorem}
\newtheorem{lem}[thm]{Lemma}
\newtheorem{assump}{Assumption}
\theoremstyle{definition}
\crefname{assump}{Assumption}{Assumptions}
\renewcommand{\vec}[1]{\bm{#1}}
\newcommand{\mat}[1]{\bm{#1}}
\newcommand*{\transpose}{%
  {\mathpalette\@transpose{}}%
}
\newcommand*{\@transpose}[2]{%
  \raisebox{\depth}{$\m@th#1\intercal$}%
}
\newcommand{\unitvector}{\ensuremath{\vec{e}}}
\newcommand{\onevector}{\ensuremath{\vec{1}}}
\let\originalleft\left
\let\originalright\right
\renewcommand{\left}{\mathopen{}\mathclose\bgroup\originalleft}
\renewcommand{\right}{\aftergroup\egroup\originalright}
\newcommand{\zerodel}{.\kern-\nulldelimiterspace}
\title{Ensemble quantile classifier}
\author[1]{Yuanhao Lai}
\author[1]{Ian McLeod}
\affil[1]{\textit{Department for Statistical and Actuarial Sciences, Western University, Canada}}
\date{Sept 25, 2019}
\begin{document}
\maketitle

\begin{abstract}
\noindent 
Both the median-based classifier and the quantile-based classifier are useful
for discriminating high-dimensional data with
heavy-tailed or skewed inputs.
But these methods are restricted as they assign equal weight to each variable
in an unregularized way.
The ensemble quantile classifier is a more flexible
regularized classifier that provides better performance
with high-dimensional data, asymmetric data
or when there are many irrelevant extraneous inputs.
The improved performance is demonstrated by a simulation
study as well as an application to text categorization.
It is proven that the estimated parameters of the ensemble quantile classifier
consistently estimate the minimal population loss under suitable general model assumptions.
It is also shown that the ensemble quantile classifier
is Bayes optimal under suitable assumptions with asymmetric Laplace distribution
inputs.\\
\textit{Keywords}: 
Binary classification,
Extraneous noise variables,
High-dimensional discriminant analysis,
Pattern recognition and machine learning,
Sparsity,
Text mining,
\end{abstract}


\section{Introduction}

The class prediction problem with
$p$-dimensional input $\vec{x}=(x_1,\dotsc,x_p)$ and
output variable $y \in {\cal K}$,  where ${\cal K}= \{1,\ldots,K\}$,
is considered.
For each class $k\in {\cal K}$, $\vec{x}$ is a $p$-multivariate random vector generated from
the multivariate probability distributions $P_k$.
The family of component-wise distance-based discriminant rules is defined by,
\begin{equation}
D_{k}=\sum_{j=1}^{p} d(x_j, P_{k,j}),
\label{eq:discriminantFunction}
\end{equation}
where $x_j \in \Re$ is a test input, $P_{k,j}$ is the $j$-th
marginal distribution of $P_k$, and
$d(x_j,P_{k,j})$ is the distance between $x_j$ and $P_{k,j}$
\citep{Hennig2016,  HALL2009, TIBSHIRANI2003}.
The optimal prediction is
\begin{equation}
\hat{y} = {\rm argmin}_{k \in {\cal K}} D_k.
\label{eq:decisionRule}
\end{equation}
For example, the centroid classifier may be defined by
$d(x_j, P_{k,j}) =  (x_j - \mu_{k,j})^2$ where $\mu_{k,j}$ is the mean of $P_{k,j}$.
This classifier is a special case of the naive Bayes classifier
 \citep{tibs2009},
also known as the diagonal linear discriminant classifier.
It provides an effective classifier for large $p$ and many types of
high dimensional data inputs \citep{Dudoit2002, Bickel2004, Fan2008}.
When the input $\vec{x}$ includes symmetric random variables
with fat tails, the median classifier (MC),
$d(x_j, P_{k,j}) = | x_j - m_{k,j} |$, where $m_{k,j} = {\rm median}(P_{k,j})$,
$k \in {\cal K}$ and $j=1,\ldots,p$, often has better performance.
\citet{HALL2009} proved that under suitable regularity conditions
MC produced asymptotically correct predictions.
In practice a training data set,
where variables may be rescaled if necessary \citep[Section 4.1]{Hennig2016},
is used to estimate the parameters $\mu_{k,j}$
or $m_{k,j}$ for $k \in {\cal K}$ and $j=1,\ldots,p$.

It sometimes happens that the distribution of two variables is similar near the center
but differs in the tails due to skewness or other characteristics.
Quantile regression makes use of this phenomenon \citep{Koenker1978}. 
The Tukey mean difference plot \citep[p.21]{cleveland1993visualizing} was invented to
compare data from such distributions.
\citet{Hennig2016} extended MC to the quantile-based classifier (QC) defined by
$d(x_j, P_{k,j}) = \rho_{\theta_j}(x_j - q_{k,j}({\theta_j}))$,
where $q_{k,j}({\theta_j})$ is the $\theta_j$-quantile of $P_{k,j}$ for
$0<\theta_j<1$ and
\begin{equation}
\rho_\theta(u)=u(\theta-\indicator{u<0})
\label{eq:QuantileDistance}
\end{equation}
is the quantile distance function \citep{Koenker1978,Koenker2005}.
When $\theta=0.5$, QC reduces to MC.
\citet{Hennig2016} showed that the QC can provide the Bayes optimal prediction
with skewed input distributions.
The usefulness of QC was demonstrated by simulation as well as an application
\citep{Hennig2016}.
An R package which implements the centroid, median and quantile classifiers
is available \citep{quantileDA}.

Although QC is effective for discriminating
high-dimensional data with heavy-tailed or skewed inputs,
it suffers from the restriction of assigning
each variable the same importance, which limits its effectiveness when
there are irrelevant extraneous inputs.
Another limitation for QC and the median centroid classifier
with high dimensional data may be noise accumulation.
\citet[Theorem 1a]{Fan2008} proved that the centroid classifier may perform no better than
random guessing due to noise accumulation with high dimensional
data.

Our proposed ensemble quantile classifier (EQC), presented in \ref{sec:EQC},
is a flexible regularized classifier that aims to overcome these two limitations and
provides better performance with high-dimensional data, asymmetric data or when there are many irrelevant extraneous inputs.
We introduce the binary EQC for discriminating observations into one of two classes and then extend it to situations with more than two classes.
In \ref{thm:consitencyEmpClassifier} of \ref{sec:theory},
it is shown that sample loss function of EQC
converges to the population value when the sample size increases.
In \ref{sec:simulation}  and \ref{sec:application},
the improved performance of EQC is demonstrated by
a simulation study and an application to text categorization.

\newpage
\section{Ensemble quantile classifier}
\label{sec:EQC}

Ensemble predictors were derived from the idea popularly known
as {\it Wisdom of the Crowd} \citep{tibs2009, silver2012}.
\citet{Newbold} showed that economic time series forecasts could
be improved by using a weighted average of forecasts from a
heterogeneous variety of time series models.
Many advanced ensemble prediction methods for supervised learning problems have
been developed such as random forests \citep{breiman2001random} and
various boosting algorithms \citep{freund1997decision, schapire2012}.
The ensemble stacking method introduced by \citet{wolpert1992stacked} and \citet{breiman1996stacked}
has also been widely used.
Comprehensive surveys of ensemble learning algorithms are given by
\citet{tibs2009, dietterich2000ensemble, zhou2012ensemble} and \citet{lior}.
Ensemble stacking uses a metalearner to combine base learners.
In \ref{subsec:QC} and \ref{subsec:EQC binary} a method for using regularized logistic regression
to combine quantile classifiers is developed and is generalized
to the multiclass case in \ref{subsec:EQC}.

\subsection{Quantile-difference transformation and EQC}
\label{subsec:QC}

For the classification problem with $K$ classes and $p$ inputs, let $q_{k,j}({\theta_j})$
be the $\theta_j$-quantile of $P_{k,j}$ where $0<\theta_j<1$ for $k \in {\cal K}$ and $j=1,\ldots,p$.
The derived inputs to the metalearner are obtained from the quantile-difference transformation of $\vec{x}=(x_1,\dotsc,x_p)$ defined by,
\begin{equation}
\Qts^{(k_1,k_2)}(\vec{x})=\bigl(\Qtse{1}^{(k_1,k_2)}(x_1),\dotsc,\Qtse{p}^{(k_1,k_2)}(x_p)\bigr),
\label{eq:quantileTransform}
\end{equation}
where,
\[
 \Qtse{j}^{(k_1,k_2)}(x_j)=
 \rho_{\theta_j}(x_j- q_{k_1,j}(\theta_j)) - \rho_{\theta_j}(x_j- q_{k_2,j}(\theta_j)),
 \ j=1,\dotsc,p,
\]
and $\rho_\theta(u)=u(\theta-\indicator{u<0})$ is the quantile-distance function.
The superscript $(k_1,k_2)$ is omitted if $k_{1}=1$ and $k_{2}=2$.
As shown in \ref{fig:Qtransform} the quantile-difference transformation has constant
tails so the derived inputs are insensitive to outliers.

\begin{figure}[H]
\begin{center}
\tikzset{every picture/.style={line width=0.75pt}} 
\begin{tikzpicture}[x=0.75pt,y=0.75pt,yscale=-1,xscale=1]

\draw  [dash pattern={on 0.84pt off 2.51pt}]  (131.87,71.43) -- (131.87,126.02) ;
\draw    (43.36,71.97) -- (326.91,71.97) ;
\draw   (317.5,67.27) -- (328.31,72.05) -- (317.5,76.83) ;
\draw  [dash pattern={on 0.84pt off 2.51pt}]  (233.69,25.71) -- (233.69,67) ;
\draw    (131.87,67) -- (131.87,71.43) ;
\draw    (233.69,67) -- (233.69,71.43) ;
\draw    (133.43,126.02) -- (233.69,25.71) ;
\draw    (233.69,25.71) -- (327.69,25.71) ;
\draw    (39.44,126.02) -- (133.43,126.02) ;

\draw   (332.39,127.6) .. controls (337.06,127.6) and (339.39,125.27) .. (339.39,120.6) -- (339.39,87.85) .. controls (339.39,81.18) and (341.72,77.85) .. (346.39,77.85) .. controls (341.72,77.85) and (339.39,74.52) .. (339.39,67.85)(339.39,70.85) -- (339.39,35.11) .. controls (339.39,30.44) and (337.06,28.11) .. (332.39,28.11) ;

\draw (133.43,81.02) node [scale=0.9]  {$q_{1}( \theta )$};
\draw (237.61,81.02) node [scale=0.9]  {$q_{2}( \theta )$};
\draw (318.77,81.8) node   {$x$};
\draw (96.74,138.55) node [scale=0.9]  {$-[ q_{2}( \theta ) -q_{1}( \theta )]( 1-\theta )$};
\draw (283.34,12.57) node [scale=0.9]  {$[ q_{2}( \theta ) -q_{1}( \theta )] \theta $};
\draw (393.03,79.02) node [scale=0.9]  {$q_{2}( \theta ) -q_{1}( \theta )$};
\end{tikzpicture}
\end{center}
\caption{Quantile-difference transformation for classes 1 and 2 when $q_{1}( \theta ) < q_{2}( \theta )$.}
\label{fig:Qtransform}
\end{figure}
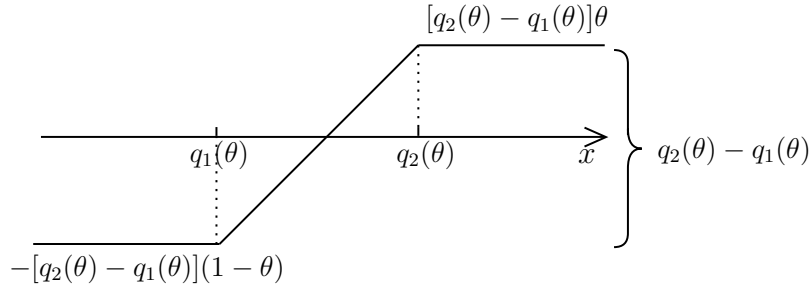

In the binary case, the QC discriminant function is given by
$s(\vec{x} \mid \vec{\theta})=\sum_j \Qtse{j}(x_j)$
where $\vec{x}=(x_1,\ldots,x_p)$ is a test input and
$\vec{\theta}=(\theta_1, \ldots, \theta_p)$.
The classifier predicts class 1 or 2 according as
$s(\vec{x} \mid \vec{\theta})\le 0$ or $>0$.
\citet{Hennig2016} estimated the parameter $\vec{\theta}$ by minimizing
the misclassification rate on the training data using a grid search.
In most cases they found that using the same value of $\theta_j=\theta$,
$j=1,\dotsc,p$
for all input variables worked well for the QC,
which means a restriction $\vec{\theta}= \{\theta,\dotsc,\theta\} \in \vec{\Theta}\subseteq (0,1)^p$.
For simplicity and computational expediency,
this restriction was imposed in the simulation study and the application to text categorization.


\subsection{EQC for binary case}
\label{subsec:EQC binary}

The discriminant function for QC is simply an additive
sum $\Qtse{j}(x_j)$ for $j=1,\dotsc,p$ but
in practice it is often the case that several of the variables
are more important and should be given more weights.
EQC is proposed to extend QC by providing an effective classifier
that takes this into account.
The discriminate function for the EQC binary case may be written,
\begin{equation}
s(\vec{x} \mid \vec{\theta},\beta_0,\vec{\beta})=
\mathsf{C}\bigl(\Qts(\vec{x}) \mid \beta_0, \vec{\beta}\bigr),
\label{eq:EQCdecisionBound}
\end{equation}
where $\Qts(\vec{x})$ is defined in \ref{eq:quantileTransform}
and $\mathsf{C}(\vec{z} \mid \beta_0,\vec{\beta})$ is the metalearner
with the intercept term $\beta_0\in\numset{R}$ and
the weight vector $\vec{\beta}\in\numset{R}^p$.
Then $(\beta_0,\vec{\beta})$ along with the $p$ quantile parameters $\vec{\theta}\in (0,1)^p$
may be estimated by minimizing a suitable regularized loss function
with a regularization parameter $\vec{\alpha}$
using cross-validation.

The metalearner $\mathsf{C}(\vec{z} \mid \beta_0,\vec{\beta})$
can be substituted by the discriminant function of most regularized classifiers
such as the penalized logistic regression \citep{park2007penalized} or
the support vector machine (SVM) \citep{cortes1995support}.
For the penalized logistic regression $\vec{\alpha}=\lambda$,
where $\lambda$ is the penalty defined in \ref{eq:EBinomialDevianceL2}
while for the SVM model with the linear kernel
defined in \ref{eq:EHinge},
$\vec{\alpha=}\mathfrak{c}$, where $\mathfrak{c}$ is the cost penalty.
Ridge logistic regression is recommended as 
a default choice for $\mathsf{C}$ since
it often performs well.
For high-dimensional data where $p>n$,
it is preferable to treat $\vec{\theta}$ as a tuning
parameter and estimate it together with $\vec{\alpha}$ using
cross-validation to avoid overfitting.

When the quantiles are substituted by their estimates, the estimated
discriminant function is denoted by
$\hat{\mathcal{G}}(\vec{x} \mid \vec{\theta}, \beta_0,\vec{\beta})$
and the estimated quantile-difference transformation is denoted by
$\Qtsn(\vec{x}_i)$.

Using the penalized logistic regression for $\mathsf{C}$,
\begin{equation}
\mathsf{C}\bigl(\Qts(\vec{x}) \mid  \beta_0, \vec{\beta}\bigr)=
\beta_0+\sum_{j=1}^p \beta_j\Qtse{j}(x_j).
\label{eq:EQCdecisionBound_linear}
\end{equation}
Let $\vec{\alpha}=\lambda$ be the penalty parameter in the regularized binomial loss function \citep{Friedman2010}.
So given $\lambda$ and the input $\Qts(\vec{x})$,
$(\beta_0,\vec{\beta})$ may be estimated by minimizing,
\begin{align}
\Loss_{\text{binomial}} \Bigl(\beta_0,\vec{\beta} \bigm| \lambda, \Qts(\vec{x}_i), i=1,\dotsc,n \Bigr)=
& -\frac{1}{n}\sum_{i=1}^{n}
\Bigl\{ (y_i-1) \mathsf{C}\bigl(\Qts(\vec{x}_i) \mid \beta_0,\vec{\beta}\bigr) -
 \nonumber \\
\phantom{{}=}& \log (1+e^{ \mathsf{C}(\Qts(\vec{x}_i) \mid \beta_0,\vec{\beta}) })\Bigr\}
 +  \frac{\lambda}{2} \norm{\vec{\beta}}_l,
\label{eq:EBinomialDevianceL2}
\end{align}
where $\norm{\vec{\beta}}_1=\sum_{j=1}^{p} |\beta_{j}|$ for LASSO
and $\norm{\vec{\beta}}_2=\sum_{j=1}^{p} \beta_{j}^2$ for ridge regression.
Using the SVM with the linear kernel (LSVM) has the same linear
discriminant function as \ref{eq:EQCdecisionBound_linear},
but $(\beta_0,\vec{\beta})$ is estimated by minimizing the regularized
hinge loss \citep[Equation 12.25]{tibs2009},
\begin{align}
\Loss_{\text{hinge}} \Bigl(\beta_0,\vec{\beta} \bigm| \mathfrak{c}, \Qts(\vec{x}_i), i=1,\dotsc,n \Bigr) =
& \frac{1}{n}\sum_{i=1}^{n}
\Bigl[ 1 - (y_i-1) \mathsf{C}
\bigl(\Qts(\vec{x}_i) \mid \beta_0,\vec{\beta} \bigr)  \Bigr]_{+}
+  \frac{1}{2n \mathfrak{c}} \norm{\vec{\beta}}_2,
\label{eq:EHinge}
\end{align}
where $[x]_+$ indicates the positive part of $x$ and
$\mathfrak{c}$ is the cost tuning parameter.

If $\beta_0=0$ and $\beta_j=1$ for $j=1,\dotsc,p$,
then EQC has the same decision boundary as QC.
In \ref{appendix:ASL}, it is shown that EQC with $\mathsf{C}$
defined in \ref{eq:EQCdecisionBound_linear}
has the same form as the Bayes decision boundary when $P_1$ and $P_2$
consist of independent asymmetric Laplace distributions.
This motivates further exploration and development of the EQC.
The estimation of $(\beta_0,\vec{\beta})$
by ridge/LASSO penalized logistic regression and LSVM
are all capable of dealing with high dimensional data.
The associated ensemble classifiers used in this paper
are denoted respectively by EQC/RIDGE, EQC/LASSO and EQC/LSVM.
A non-negative constraint of $\vec{\beta}$ was also investigated
but we did not find an experimentally significant accuracy improvement, which
agrees with a previous study of stacking classifiers \citep{ting1999issues}.

\ref{alg:Algorithm_EQC} shows the entire process of tuning and training EQC.
Here the misclassification rate is used as a criterion to choose the tuning parameters
but in some cases other criteria such as the AUC may be appropriate.

The time complexity of this algorithm is determined by the time complexities 
for the quantile estimation,
the quantile-difference transformation and the coordinate descent algorithm
which are respectively
$O\bigl(p n \log(n) \bigr)$,
$O(n p H)$,
$O\bigl(n (p+1) I H)\bigr)$,
where $H$ is the size of the tuning set $\vec{\Theta}$,
and $I$ is the number
of iterations required for minimization of the loss function.
In total \ref{alg:Algorithm_EQC} has complexity $O\bigl((T+1)(pn\log(n)+n(p+1)IH)\bigr)$,
where $T$ is the number of cross-validation folds.
The computational burden for cross-validation may be
reduced by using parallel computation \citep{kuhn2013}.

\begin{algorithm}[ht]
\KwData{ $S=\{(\vec{x}_1, y_1),\dotsc,(\vec{x}_n, y_n)\}$, $p=$ the number of variables within $\vec{x}$. }
\KwIn{ $\vec{\Theta}=$ tuning set of $\vec{\theta}\in (0,1)^p$,
$\numset{A}=$ tuning set of $\vec{\alpha}$,
$T=$ the number of cross-validation folds.}
\Begin(tuning parameters:){
Randomly partition $S$ into $T$ non-overlap folds, $S_1,\dotsc,S_T$. \\
\For(fit $\mathsf{C}$ on $S\setminus S_t$,){$t = 1,\dotsc,T$}{
  \ForEach{$\vec{\theta}$ in $\vec{\Theta}$}{
   Estimate $\hat{q}_{k,j}(\theta_j)$ for $k=1,2$ and $j=1,\dotsc,p$ from $S\setminus S_t$. \\
   Compute $\Qtsn(\vec{x}_i)$ for $i=1,\dotsc,n$. \\
    \ForEach{ $\vec{\alpha}$ in $\numset{A}$ }{
         Estimate $(\beta_0,\vec{\beta})$ by minimizing the loss function
         such as
         \ref{eq:EBinomialDevianceL2} on
         $\Qtsn(\vec{x}_i)$ for $(\vec{x}_i,y_i) \in S\setminus S_t$
         with the tuning parameter $\vec{\alpha}$. \\
        Apply the estimated $\hat{\mathsf{C}}$ on $\Qtsn(\vec{x}_i)$ for $(\vec{x}_i,y_i) \in S_t$ and return
        the misclassification rates.
      }
  }
 }
 Average the misclassification rate over folds for each combination of $\vec{\theta}$ and $\vec{\alpha}$.
 Denote $(\hat{\vec{\theta}}, \hat{\vec{\alpha}})$ as the combination with the minimum average misclassification rate. \\
}
\KwOut{Refit $\mathsf{C}$ with $(\hat{\vec{\theta}}, \hat{\vec{\alpha}})$
 on the full data $S$.}
\caption{Tune and train EQC for binary case}
\label{alg:Algorithm_EQC}
\end{algorithm}

\subsection{Multiclass EQC}
\label{subsec:EQC}

A practical method to extend the binary classifier to multiclass ($K>2$)
is to build a set of
one-versus-all classifiers or a set of one-versus-one classifiers \cite[p. 658]{tibs2009}.
A less heuristic approach, similar to the multinomial logistic regression,
is to use the $K-1$ log-odd-ratios
to implement maximum likelihood estimation (MLE).
The multinomial logistic regression requires estimation of $(K-1)(p+1)$ coefficients
but here the multiclass EQC only requires $p+K-1$ coefficients
including $p$ weights $\beta_j$, $j=1,\dotsc,p$,
and $K-1$ intercept terms $\beta_{0,k}$, $k=1,\dotsc,K-1$.

Let $\vec{\beta}=(\beta_1,\dotsc,\beta_p)$.
Assume for an input $\vec{x}$,
\begin{align*}
\log \frac{\prob(y=1 \given
\vec{x},\vec{\theta},\vec{\beta},\{\beta_{0,k}\}_{k=1}^{K-1})}
{\prob(y=K \given
\vec{x},\vec{\theta},\vec{\beta},\{\beta_{0,k}\}_{k=1}^{K-1})}
& = -\mathsf{C}\bigl(\Qts^{(1,K)}(\vec{x}) \mid
 \beta_{0,1},\vec{\beta}\bigr) \\
\log \frac{\prob(y=2 \given \vec{x},\vec{\theta},\vec{\beta},\{\beta_{0,k}\}_{k=1}^{K-1})}
{\prob(y=K \given
\vec{x},\vec{\theta},\vec{\beta},\{\beta_{0,k}\}_{k=1}^{K-1})}
& = -\mathsf{C}\bigl(\Qts^{(2,K)}(\vec{x}) \mid
\beta_{0,2},\vec{\beta}\bigr) \\
\vdots & \\
\log \frac{\prob(y=K-1 \given \vec{x},\vec{\theta},\vec{\beta},\{\beta_{0,k}\}_{k=1}^{K-1})}
{\prob(y=K \given
\vec{x},\vec{\theta},\vec{\beta},\{\beta_{0,k}\}_{k=1}^{K-1})}
& = -\mathsf{C}\bigl(\Qts^{(K-1,K)}(\vec{x}) \mid
\beta_{0,K-1},\vec{\beta}\bigr) ,
\end{align*}
and $\sum_{k=1}^{K} \prob(y=k \given \vec{x},\vec{\theta},\vec{\beta},\{\beta_{0,k}\}_{k=1}^{K-1})=1$.
The negative sign prior to $\mathsf{C}$ is used because
class $K$ is used in the denominator of the log-odd-ratios
and it is the alternative class in $\Qts^{(k,K)}(\vec{x})$,
which implies that
the smaller $\mathsf{C}\bigl(\Qts^{(k,K)}(\vec{x}) \mid
\beta_{0,k},\vec{\beta}\bigr)$ is,
the closer $\vec{x}$ is to class $k$ compared to class $K$ and
hence the larger the log-odd-ratios between class $k$ and class $K$ is.
Let $\beta_{0,K}=0$ and thus,
\begin{equation}
\prob(y=k_0 \given
\vec{x},\vec{\theta},\vec{\beta},\{\beta_{0,k}\}_{k=1}^{K-1})
=\frac{e^{-\mathsf{C}\bigl(\left. \Qts^{(k_0,K)}(\vec{x}) \right\vert\,
\beta_{0,k_0}, \vec{\beta}\bigr)}}
{\sum_{k=1}^{K}
e^{{ -\mathsf{C}\bigl(\left.\Qts^{(k,K)}(\vec{x}) \right\vert\,
\beta_{0,k}, \vec{\beta}\bigr)}} },
\text{ for } k_0=1,\dotsc,K.
\label{eq:softmaxProb}
\end{equation}

Given the tuning parameter $\lambda$,
the L2 regularized log-likelihood function may be written,
\begin{equation}
\Ploglikelihood(\vec{\beta},\{\beta_{0,k}\}_{k=1}^{K-1} \given
\vec{\theta},\lambda)=
\frac{1}{n}\sum_{i=1}^{n} \log
\prob(y=y_{i} \given \vec{x}_i,\vec{\theta},
\vec{\beta},\{\beta_{0,k}\}_{k=1}^{K-1} )
- \frac{\lambda}{2}\sum_{j=1}^{p} \beta_{j}^2.
\label{eq:Ploglik}
\end{equation}
It can be shown that
$\Ploglikelihood(\vec{\beta},\{\beta_{0,k}\}_{k=1}^{K-1} \given
\vec{\theta},\lambda)$ is a concave function so it is amenable to optimization based on gradients or Newton's method.
For further details see \ref{appendix:MLEmulticlassEQC} and
the software implementation \citep{eqcGithub}.

\section{Asymptotic consistency}
\label{sec:theory}
In this section, the theoretical result is derived in a slightly modified setup of
the method in \ref{alg:Algorithm_EQC}.
It is assumed that $p$ is fixed while $n$ increases,
so $\vec{\theta}$ and $(\beta_0,\vec{\beta})$ may be estimated by maximum likelihood.
In addition, $\vec{\alpha}$ is neglected as
the asymptotic properties of the selection of the tuning parameter are not discussed.

Let $(\tilde{\vec{\theta}},\tilde{\beta}_0,\tilde{\vec{\beta}})$ be the parameters that
minimize the population binomial loss function,
\begin{align}
\Psi(\vec{\theta},\beta_0,\vec{\beta}) & =
\pi_{1}\int \log (1+e^{ \mathsf{C}(\Qts(\vec{x}) \mid \beta_0,\vec{\beta}) })  \diff P_{1}(\vec{x})
 -  \nonumber \\
& \phantom{==}
\pi_{2}\int \Bigl\{ \mathsf{C}\bigl(\Qts(\vec{x}) \mid \beta_0,\vec{\beta}\bigr) -
\log (1+e^{ \mathsf{C}(\Qts(\vec{x}) \mid \beta_0,\vec{\beta}) })\Bigr\}  \diff P_{2}(\vec{x}),
\label{eq:BinomialDeviance}
\end{align}
where $\pi_{1}$ and $\pi_{2}$ are prior probabilities of the two classes.

Let $(\hat{\vec{\theta}}_n,\hat{\beta}_{n0},\hat{\vec{\beta}}_n)$ be the parameters that minimize
the empirical binomial loss function,
\begin{equation}
\Psi_n(\vec{\theta},\beta_0,\vec{\beta})=
-\frac{1}{n}\sum_{i=1}^{n}
\Bigl\{ (y_i-1) \mathsf{C}\bigl(\Qtsn(\vec{x}_i) \mid \beta_0,\vec{\beta}\bigr) -
\log (1+e^{ \mathsf{C}(\Qtsn(\vec{x}_i) \mid
\beta_0,\vec{\beta}) })\Bigr\}.
\label{eq:EBinomialDeviance}
\end{equation}

It is shown that under suitable assumptions,
$(\hat{\vec{\theta}}_n,\hat{\beta}_{n0},\hat{\vec{\beta}}_n)$
is a consistent estimator of $(\tilde{\vec{\theta}},\tilde{\beta}_0,\tilde{\vec{\beta}})$.
The proofs are available in \ref{appendix:proof}.
These results have been proved by \citet{Hennig2016} for the
quantile-based classifier with the 0-1 loss function.
The proof given by them has been adapted to
take into account the additional parameters $(\beta_0,\vec{\beta})$ and
the change of the loss function from the 0-1 loss function to
the binomial loss function.
\ref{assumption:boundedWeight} is added in addition to
\ref{assumption:continuousQuantile} made by \citet{Hennig2016}.
The linear discriminant function or metalearner $\mathsf{C}(\vec{z} \mid \beta_0,\vec{\beta})$ in \ref{eq:EQCdecisionBound_linear},
the discriminant function with multiplicative interactions,
and the polynomial discriminant function used in polynomial kernel SVM
all satisfy \ref{assumption:boundedWeight}.
These assumptions ensure the convergence can still hold with $(\beta_0,\vec{\beta})$.
The use of the binomial loss function simplifies the proof and the computation.
Since the 0-1 loss function is not a convex or a continuous function,
its minimization is NP-hard and hence
the binomial loss function or the hinge loss function are used instead.
Assumption 2 of \citet{Hennig2016} is not needed because the binomial loss function is used.

\begin{assump}
\label{assumption:continuousQuantile}
$\forall j=1,\dotsc,p$, $k=1,2$, the quantile function $q_{k,j}(\theta_j)$
is a continuous function of $\theta_j \in \Theta_j \subset \ointerval{0}{1}$.
\end{assump}

\begin{assump}
\label{assumption:boundedWeight}
$\mathsf{C}(\vec{z} \mid \beta_0,\vec{\beta})$ is required to be
differentiable with respect to $\vec{z}$, $\beta_0$ and $\vec{\beta}$.
In addition, $\tilde{\beta}_0$ and $\tilde{\vec{\beta}}$ are required to be bounded.
That is, $\exists C>0$
such that $\abs{\tilde{\beta}_j}\le C$, for $j=0,1,\dotsc,p$.
\end{assump}

\begin{thm}
\label{thm:consitencyClassifier}
Under \ref{assumption:continuousQuantile,assumption:boundedWeight},
$\forall \epsilon>0$,
\[
\lim_{n\rightarrow\infty}\prob\{
\abs{
\Psi(\tilde{\vec{\theta}},\tilde{\beta}_0,\tilde{\vec{\beta}})-
\Psi(\hat{\vec{\theta}}_n,\hat{\beta}_{n0},\hat{\vec{\beta}}_n)
}>\epsilon
\}=0.
\]
\end{thm}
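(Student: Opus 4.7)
The plan is the standard two-step M-estimation argument: first establish that $\Psi_n$ converges to $\Psi$ uniformly over the parameter space, and then sandwich the value $\Psi(\hat{\vec{\theta}}_n,\hat{\beta}_{n0},\hat{\vec{\beta}}_n)$ between $\Psi(\tilde{\vec{\theta}},\tilde{\beta}_0,\tilde{\vec{\beta}})$ and $\Psi_n(\tilde{\vec{\theta}},\tilde{\beta}_0,\tilde{\vec{\beta}})$ using the definitions of the two minimizers. Concretely, if one can show
\[
\sup_{(\vec{\theta},\beta_0,\vec{\beta})\in\vec{\Theta}\times[-C,C]^{p+1}}
\abs{\Psi_n(\vec{\theta},\beta_0,\vec{\beta})-\Psi(\vec{\theta},\beta_0,\vec{\beta})}\xrightarrow{\prob}0,
\]
then the optimality of $(\hat{\vec{\theta}}_n,\hat{\beta}_{n0},\hat{\vec{\beta}}_n)$ for $\Psi_n$ and of $(\tilde{\vec{\theta}},\tilde{\beta}_0,\tilde{\vec{\beta}})$ for $\Psi$ give the two-sided bound $0\le\Psi(\hat{\vec{\theta}}_n,\hat{\beta}_{n0},\hat{\vec{\beta}}_n)-\Psi(\tilde{\vec{\theta}},\tilde{\beta}_0,\tilde{\vec{\beta}})\le 2\sup\abs{\Psi_n-\Psi}$, which is the desired conclusion. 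Compactness of the parameter set is important here, and it is provided by taking $\Theta_j$ closed in $(0,1)$ in Assumption~\ref{assumption:continuousQuantile} together with the uniform bound $C$ on $(\beta_0,\vec{\beta})$ from Assumption~\ref{assumption:boundedWeight}.

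The uniform convergence will be split through an intermediate process $\bar{\Psi}_n$ that is the empirical average in \ref{eq:EBinomialDeviance} but computed with the \emph{population} quantiles $q_{k,j}(\theta_j)$ rather than the estimated $\hat{q}_{k,j}(\theta_j)$. Then I would write $\Psi_n-\Psi=(\Psi_n-\bar{\Psi}_n)+(\bar{\Psi}_n-\Psi)$ and show each term tends to zero uniformly. For $\bar{\Psi}_n-\Psi$, a uniform law of large numbers applies: by Assumption~\ref{assumption:boundedWeight} the metalearner $\mathsf{C}$ is differentiable and the coefficients are in the compact box, and the integrand $(y-1)\mathsf{C}-\log(1+e^{\mathsf{C}})$ is Lipschitz in $(\vec{\theta},\beta_0,\vec{\beta})$ on the compact parameter set with envelope that has finite first moment (the softplus term is bounded by $|\mathsf{C}|+\log 2$), so Glivenko--Cantelli for the resulting finite-dimensional class yields the uniform limit.

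For $\Psi_n-\bar{\Psi}_n$, the key is uniform consistency of the estimated quantiles. By Assumption~\ref{assumption:continuousQuantile} each $q_{k,j}(\theta_j)$ is continuous in $\theta_j$ on the compact $\Theta_j$, and the classical Glivenko--Cantelli theorem applied to the empirical c.d.f.\ of $x_j$ within each class (which is available since, conditional on $y=k$, the observations $x_{i,j}$ are i.i.d.) yields $\sup_{\theta_j\in\Theta_j}\abs{\hat{q}_{k,j}(\theta_j)-q_{k,j}(\theta_j)}\xrightarrow{\text{a.s.}}0$; this is the same uniform-quantile argument used by Hennig and Viroli. Since $\rho_\theta$ is $1$-Lipschitz in its argument uniformly in $\theta$, the quantile-difference transformation $\Qtsn(\vec{x}_i)$ converges to $\Qts(\vec{x}_i)$ uniformly in $\vec{\theta}$, and then differentiability of $\mathsf{C}$ and boundedness of $(\beta_0,\vec{\beta})$ transfer this uniform closeness to uniform closeness of $\mathsf{C}(\Qtsn(\vec{x}_i)\mid\beta_0,\vec{\beta})$ to $\mathsf{C}(\Qts(\vec{x}_i)\mid\beta_0,\vec{\beta})$; a Lipschitz bound on $\log(1+e^{u})$ then yields $\sup\abs{\Psi_n-\bar{\Psi}_n}\xrightarrow{\prob}0$.

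The main obstacle is making this plug-in step rigorous uniformly in $\vec{\theta}$ and uniformly over the $n$ observations: one needs a Lipschitz constant for $\mathsf{C}(\cdot\mid\beta_0,\vec{\beta})$ in $\vec{z}$ that is uniform over the compact $(\beta_0,\vec{\beta})$-box and over the range of inputs arising in the sample. For the linear metalearner in \ref{eq:EQCdecisionBound_linear} this is immediate since the Lipschitz constant in $\vec{z}$ is just $\norm{\vec{\beta}}$, which is bounded by $\sqrt{p}\,C$; for more general differentiable $\mathsf{C}$ one invokes the differentiability clause of Assumption~\ref{assumption:boundedWeight} together with compactness of the arguments (noting that $\Qts(\vec{x})$ has components bounded in terms of the quantile differences, cf.\ \ref{fig:Qtransform}, so the relevant argument domain is bounded almost surely and the gradient of $\mathsf{C}$ achieves a finite maximum there). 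Once this uniform Lipschitz bound is in hand, the two uniform-convergence pieces combine and the sandwich argument of the first paragraph delivers the theorem.
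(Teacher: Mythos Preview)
Your proposal is correct and follows essentially the same architecture as the paper: establish the uniform convergence $\sup_{\vec{\eta}}\abs{\Psi_n(\vec{\eta})-\Psi(\vec{\eta})}\xrightarrow{\prob}0$ (the paper's Lemma~\ref{lem:convergeEmp}), splitting through the intermediate empirical process $\bar{\Psi}_n$ built with population quantiles (the paper's $\Psi_m^{*}$), and then combine with the optimality of the two minimizers to sandwich $\Psi(\hat{\vec{\eta}}_n)-\Psi(\tilde{\vec{\eta}})$. The only cosmetic difference is that the paper packages the uniform convergence as a subsequence contradiction argument rather than your direct ULLN/Lipschitz route, but the ingredients---compactness from Assumptions~\ref{assumption:continuousQuantile} and~\ref{assumption:boundedWeight}, strong consistency of empirical quantiles, and continuity of $\mathsf{C}$---are identical.
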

\ref{assumption:boundedWeight} is needed to ensure that the estimation of $(\beta_0,\vec{\beta})$ converges.
\ref{thm:consitencyClassifier} shows that the estimated parameters are consistent in
achieving the minimal population loss.
Beside, \ref{thm:consitencyEmpClassifier} states that
the empirical minimal loss will
converge to the population  minimal loss
asymptotically as $n\rightarrow\infty$ with $p$ fixed.
\begin{thm}
\label{thm:consitencyEmpClassifier}
Under \ref{assumption:continuousQuantile,assumption:boundedWeight},
$\forall \epsilon>0$,
\[
\lim_{n\rightarrow\infty}\prob\{
\abs{
\Psi(\tilde{\vec{\theta}},\tilde{\beta}_0,\tilde{\vec{\beta}})-
\Psi_n(\hat{\vec{\theta}}_n,\hat{\beta}_{n0},\hat{\vec{\beta}}_n)
}>\epsilon
\}=0.
\]
\end{thm}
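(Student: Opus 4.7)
The plan is to reduce the statement to an application of \ref{thm:consitencyClassifier} plus a uniform law of large numbers. Write
\begin{align*}
\bigl|\Psi(\tilde{\vec{\theta}},\tilde{\beta}_0,\tilde{\vec{\beta}}) -
      \Psi_n(\hat{\vec{\theta}}_n,\hat{\beta}_{n0},\hat{\vec{\beta}}_n)\bigr|
 &\le
 \bigl|\Psi(\tilde{\vec{\theta}},\tilde{\beta}_0,\tilde{\vec{\beta}}) -
       \Psi(\hat{\vec{\theta}}_n,\hat{\beta}_{n0},\hat{\vec{\beta}}_n)\bigr| \\
 &\quad +
 \bigl|\Psi(\hat{\vec{\theta}}_n,\hat{\beta}_{n0},\hat{\vec{\beta}}_n) -
       \Psi_n(\hat{\vec{\theta}}_n,\hat{\beta}_{n0},\hat{\vec{\beta}}_n)\bigr|.
\end{align*}
The first summand vanishes in probability by \ref{thm:consitencyClassifier}, so it remains only to control the second. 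I would control it by showing the stronger uniform statement
\[
\sup_{(\vec{\theta},\beta_0,\vec{\beta})\in K}\bigl|\Psi_n(\vec{\theta},\beta_0,\vec{\beta}) - \Psi(\vec{\theta},\beta_0,\vec{\beta})\bigr| \xrightarrow{P} 0,
\]
on a compact set $K=\prod_{j=1}^p\Theta_j\times\cinterval{-C}{C}^{p+1}$; the constant $C$ from \ref{assumption:boundedWeight} ensures that $K$ eventually contains both $(\tilde{\vec{\theta}},\tilde{\beta}_0,\tilde{\vec{\beta}})$ and $(\hat{\vec{\theta}}_n,\hat{\beta}_{n0},\hat{\vec{\beta}}_n)$, and the $\Theta_j$ are compact by \ref{assumption:continuousQuantile}.

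To establish this uniform convergence, I would introduce an intermediate loss $\tilde{\Psi}_n$ defined like $\Psi_n$ but using the population quantiles $q_{k,j}(\theta_j)$ instead of the sample quantiles $\hat{q}_{k,j}(\theta_j)$, and split
\[
\sup_{K}|\Psi_n - \Psi| \le \sup_{K}|\Psi_n - \tilde{\Psi}_n| + \sup_{K}|\tilde{\Psi}_n - \Psi|.
\]
The term $\sup_{K}|\tilde{\Psi}_n-\Psi|$ is a standard uniform law of large numbers: the integrand $(y_i-1)\mathsf{C}\bigl(\Qts(\vec{x}_i)\mid\beta_0,\vec{\beta}\bigr) - \log\bigl(1+e^{\mathsf{C}(\Qts(\vec{x}_i)\mid\beta_0,\vec{\beta})}\bigr)$ is continuous in $(\vec{\theta},\beta_0,\vec{\beta})$ by \ref{assumption:continuousQuantile} and \ref{assumption:boundedWeight}, and on the compact $K$ it is dominated by an integrable envelope (because $\log(1+e^u)\le |u|+\log 2$ and $\mathsf{C}$ is linearly bounded in the bounded quantity $\Qts(\vec{x})$, whose tails are constant by \ref{fig:Qtransform}). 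Hence the indexing class is Glivenko--Cantelli.

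The main obstacle is the plug-in term $\sup_K|\Psi_n-\tilde{\Psi}_n|$. Here I would use the uniform consistency of sample quantiles,
\[
\max_{k\in\{1,2\},\,j\le p}\sup_{\theta_j\in\Theta_j}\bigl|\hat{q}_{k,j}(\theta_j) - q_{k,j}(\theta_j)\bigr| \xrightarrow{a.s.} 0,
\]
which follows from the Glivenko--Cantelli theorem for the empirical CDF together with the continuity of $q_{k,j}$ in \ref{assumption:continuousQuantile}. Since $\rho_{\theta_j}(\cdot)$ is $1$-Lipschitz and $\mathsf{C}$ is differentiable (with partial derivatives uniformly bounded over $K$ by \ref{assumption:boundedWeight} and compactness), a chain-rule Lipschitz argument yields a constant $M$ depending only on $K$ with
\[
\bigl|\Psi_n(\vec{\theta},\beta_0,\vec{\beta}) - \tilde{\Psi}_n(\vec{\theta},\beta_0,\vec{\beta})\bigr|
 \le M \cdot \max_{k,j}\sup_{\theta_j}\bigl|\hat{q}_{k,j}(\theta_j) - q_{k,j}(\theta_j)\bigr|,
\]
uniformly on $K$, so this supremum also vanishes. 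The delicate step is verifying that $M$ can truly be chosen uniformly on $K$; this is where the boundedness part of \ref{assumption:boundedWeight} is essential, as it lets me replace $\vec{\beta}$ by its bound and invoke continuity of the partials of $\mathsf{C}$ on a compact argument set. Combining the two uniform bounds with the triangle inequality above delivers the conclusion.
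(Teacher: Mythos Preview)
Your proposal is correct and matches the paper's proof at the top level: the same two-term triangle inequality, the same appeal to \ref{thm:consitencyClassifier} for the first summand, and the same reliance on uniform convergence $\sup_K|\Psi_n-\Psi|\to 0$ for the second, which the paper packages separately as \ref{lem:convergeEmp}. Where you differ is in how that uniform convergence is established. The paper argues by contradiction: it extracts a convergent subsequence $\vec{\eta}_m^*\to\vec{\eta}^*$ along which the bound fails, then shows each of $|\Psi_m(\vec{\eta}_m^*)-\Psi_m(\vec{\eta}^*)|$, $|\Psi_m(\vec{\eta}^*)-\Psi(\vec{\eta}^*)|$, $|\Psi(\vec{\eta}^*)-\Psi(\vec{\eta}_m^*)|$ vanishes, using the same intermediate loss $\Psi_m^*$ with population quantiles and invoking \citet{Mason1982} for uniform quantile consistency. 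Your direct Glivenko--Cantelli plus Lipschitz route is arguably cleaner and more quantitative, but note that it asks for a uniform bound on the partial derivatives of $\mathsf{C}$, whereas the paper's subsequence argument only needs that differentiability implies continuity; \ref{assumption:boundedWeight} states differentiability without explicitly giving continuous differentiability, so your ``delicate step'' is indeed the place where a referee might press.
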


Based on \ref{thm:consitencyClassifier} and \ref{thm:consitencyEmpClassifier},
when $n$ is large relative to $p$,  \ref{alg:Algorithm_EQC} can be modified
to estimate $\theta$ by minimizing the training loss function instead of using cross-validation approach.

\section{Simulation validation}
\label{sec:simulation}

\subsection{Experimental setup}
\label{sec:simulationDist}

Simulation experiments are presented to demonstrate the improved
performance of EQC over QC with high-dimensional skewed inputs as well
as other classifiers.
The following thirteen classifiers were compared:
\begin{description}
    \item[QC] quantile-based classifier \citep{Hennig2016};
    \item[MC] median-based classifier \citep{HALL2009};
    \item[EMC] EQC with $\theta=0.5$ with ridge logistic regression;
    \item[EQC/LOGISTIC] EQC with logistic regression;
    \item[EQC/RIDGE] EQC with ridge logistic regression;
    \item[EQC/LASSO] EQC with LASSO logistic regression;
    \item[EQC/LSVM] EQC with linear SVM;
    \item[NB] naive Bayes classifier;
    \item[LDA] linear discriminant analysis;
    \item[LASSO] LASSO logistic regression \citep{Friedman2010};
    \item[RIDGE] ridge logistic regression \citep{Friedman2010};
    \item[LSVM] SVM with linear kernel  \citep{cortes1995support};
    \item[RSVM] SVM with radial basis kernel  \citep{cortes1995support}.
\end{description}

Tuning parameters were selected by minimizing the 5-fold cross validation errors.
QC, MC and EQC were fit using the \texttt{R} implementation \citep{eqcGithub}
while NB, LSVM and RSVM used the algorithms in \citet{e1071}.
The LDA from \citep{MASS} was used.
RIDGE and LASSO used the package \textit{glmnet} \citep{Friedman2010}.
EQC/LOGISTIC used the base R function \texttt{stats::glm}.

Three location-shift input distributions, corresponding to heavy-tails,
highly skewed and a heterogeneous skewed, were examined as discussed by \citet{Hennig2016}:
    \begin{description}
    \item[\textbf{T3}] $t$ distribution on 3 degrees of freedom;
    \item[\textbf{LOGNORMAL}] log-normal distribution;
    \item[\textbf{HETEROGENEOUS}] equal number of $W$, $\exp(W)$, $\log(\abs{W})$, $W^2$ and $\abs{W}^{0.5}$ in order,
    where $W \sim \Normal[(0,1)]$.
    \end{description}
All generated variables were statistically independent and
the distributions were adjusted to have mean zero and variance 1.
The classification error rates were estimated using 100 simulations with
independent test samples of size $10^4$.

For each of the three distributions a location-shift vector $\vec{\delta}$
was used to produce the second class where
 $\vec{\delta}=(0.32,\dotsc,0.32)$ for \textbf{T3},
 $\vec{\delta}=(0.06,\dotsc,0.06)$ for \textbf{LOGNORMAL}
and $\vec{\delta}=(0.14,\dotsc,0.14)$ for \textbf{HETEROGENEOUS}.
The additive shifts were chosen to make the test error rate
of the QC close to $10\%$ for samples of size $n=100$.

Simulation experiments to demonstrate the effectiveness of
prediction algorithms with high-dimensional data typically use a
large number of non-informative features or noise variables.
For example, the models of \citet[Equation 18.37]{tibs2009}
and \citet[Section 5.1]{Fan2008} used 95\% and 98\% of the variables
to represent informationless random noise.
We considered the influence of these irreverent variables by including
Gaussian predictors independent of the classes.

For each simulation scenario, the following settings were used,
\begin{enumerate}
   \item Training sample size $n$: $100$, $200$;
   \item Number of all variables $p$: $50$, $100$, $200$;
   \item Standard Gaussian noises with the percentage of noise variables
   within the $p$ variables set to
   $0\%$, $50\%$, $90\%$, which corresponds to 0, $p/2$ and $0.9 \times p$ variables being
   non-informative. The corresponding simulation parameter setting will be denoted as
   ${\rm NOISE}\ = 0\%, 50\%, 90\%$.
   For example, when ${\rm NOISE}\ = 90\%$ there are
   respectively 5, 10 and 20 informative variables when $p=50, 100, 200$.
\end{enumerate}
In addition to the case where the input variables were statistical independent,
the correlated variables case was also investigated.
Correlation was imposed by using the Gaussian copula with
the correlation matrix uniformly sampled from the space of positive-definite
correlation matrices  \citep{JOE2006} with equal correlations distributed as beta($0.5$, $0.5$).
The implementation is available in the R package \textit{clusterGeneration} \citep{clusterGeneration}.

\subsection{Test error rates}
\label{sec:simInterpret}

The mean test error rates for each of the 100 simulations are tabulated
in \ref{appendix:otherDetail}.

The boxplots of the test error rates for the independent variables
in the low dimensional, $n=200, p=50$,
and the high dimensional, $n=100, p=200$, scenarios are displayed in
\ref{fig:Err_n200p50} and \ref{fig:Err_n100p200} respectively.
The scenario with extraneous noise present is shown in
the bottom two rows of \ref{fig:Err_n200p50} and \ref{fig:Err_n100p200} and
here it is seen that in both the  \textbf{LOGNORMAL} and \textbf{HETEROGENEOUS} cases,
the EQC methods outperform all the other methods.

Focusing on \ref{fig:Err_n200p50},
in the symmetric thick-tailed case, \textbf{T3}, MC is best
but QC, EMC and EQC/RIDGE closely approximate the MC performance as might
be expected.
While in the skewed cases, \textbf{LOGNORMAL} and \textbf{HETEROGENEOUS},
the four regularized EQC methods outperform all others.
It is also interesting that
EQC/LOGISITC has a lower error rate than QC in the \textbf{HETEROGENEOUS} case
as shown in the panels in the right-most column.
This implies that the addition of weights using the ensemble method
can help improve performance when the importance of variables varies.
However, even in this case the regularized EQC methods are still best 
and the relative performance of the regularized methods over QC
improves as the proportion of noise variables increases.
Next in the \textbf{LOGNORMAL} case shown in the middle panels in \ref{fig:Err_n200p50},
EQC/RIDGE has overall the best performance though when there
is no extraneous noise, QC is about the same.
But as extraneous noise is added, all EQC methods improve relative to QC
where EQC/LOGISTIC's performance is slightly
worse than the EQC regularized logistic methods.

\begin{figure}[H]
\begin{center}
\includegraphics[scale=0.6]{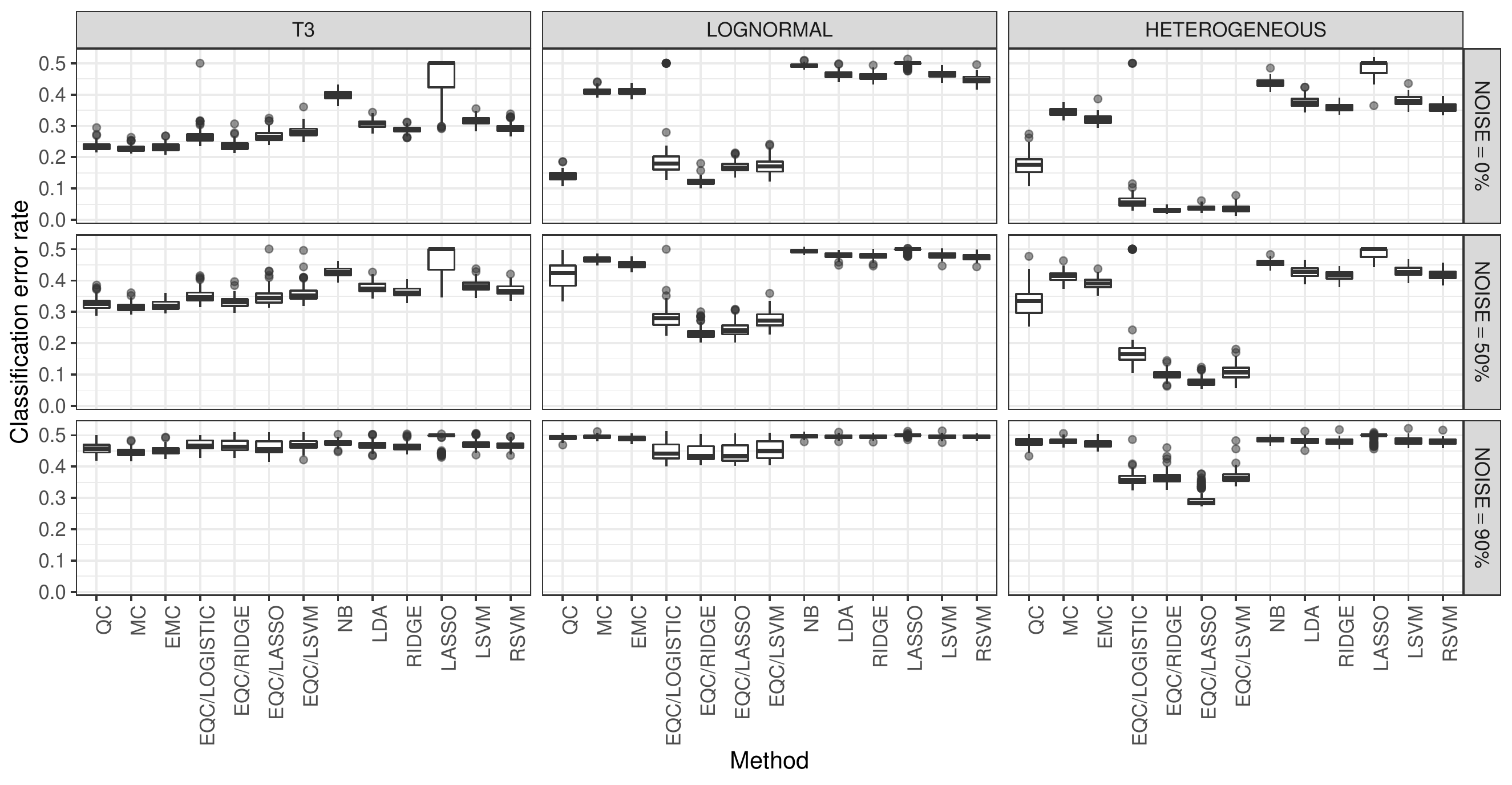}
\end{center}
\caption{Low dimensional scenario test error rates, $n=200, p=50$.}
\label{fig:Err_n200p50}
\end{figure}

In the high-dimensional case in \ref{fig:Err_n100p200},
the conclusions are broadly similar to the low-dimensional case 
in \ref{fig:Err_n200p50} but with two notable differences.
First, QC is much worse than the EQC/RIDGE in
the \textbf{LOGNORMAL} scenario even when all variables are informative.
Since QC lacks regularization, it becomes a victim of
the accumulated noise phenomenon \citep{Fan2008}.
Second, EQC/LASSO is much worse than EQC/RIDGE and EQC/LSVM
with the low (0\%) and medium (50\%) level of noises
since the assumption of sparse predictors made by 
LASSO 
(\citeauthor{tibs2009}, \citeyear{tibs2009}, Section 16.2.2;
\citeauthor{James2013}, \citeyear{James2013}, Section 6.2.2.3)
does not hold.
Conversely when the noise level is 90\%, EQC/LASSO becomes
competitive to EQC/RIDGE and EQC/LSVM in the scenarios
of  \textbf{T3} and  \textbf{LOGNORMAL}, 
and it becomes dominant in the \textbf{HETEROGENEOUS} scenario.

\begin{figure}[H]
\begin{center}
\includegraphics[scale=0.6]{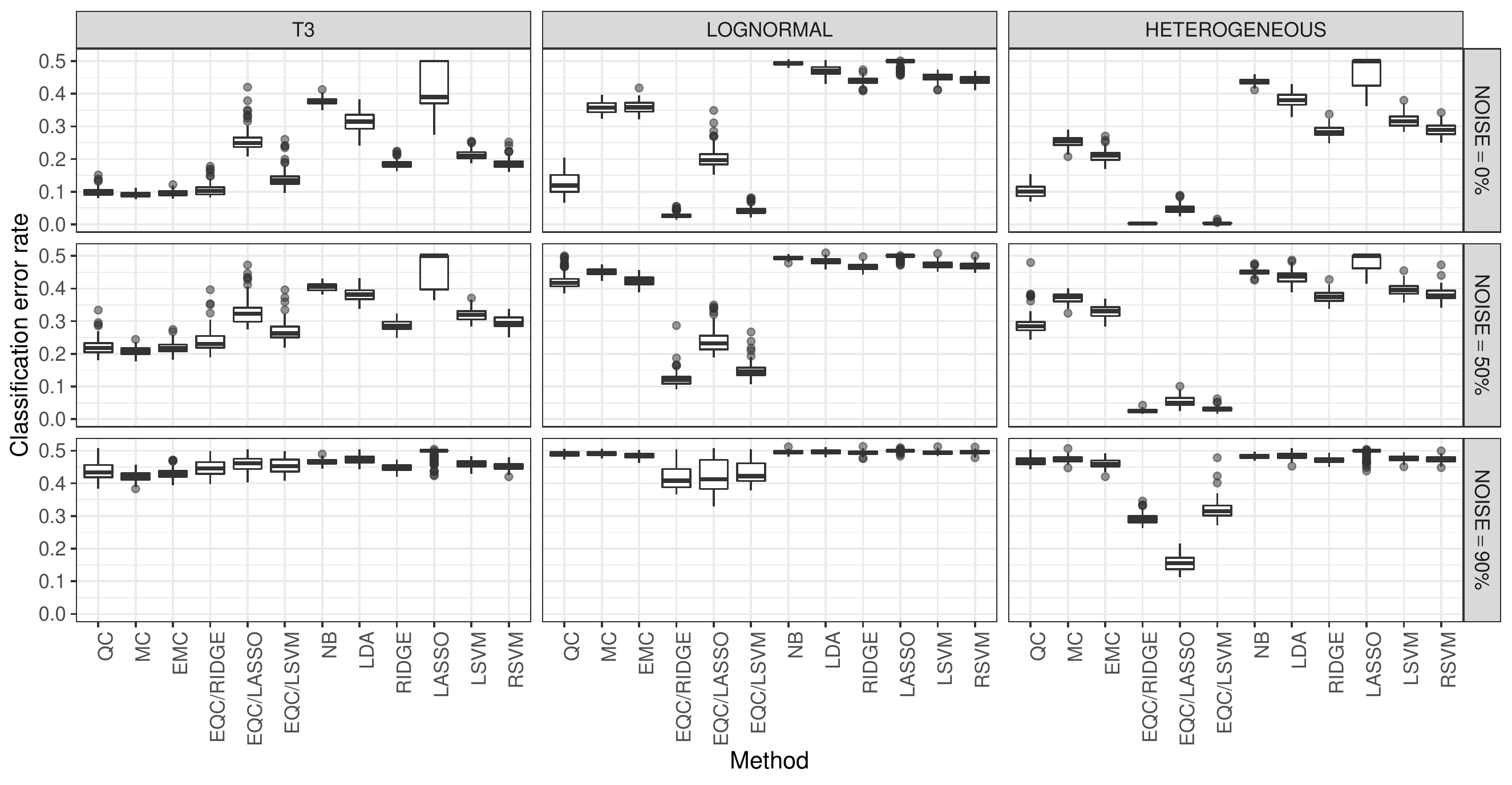}
\end{center}
\caption{High dimensional scenario test error rates, $n=100, p=200$. 
EQC/LOGISTIC is not available in the high dimensional scenario.
}
\label{fig:Err_n100p200}
\end{figure}

\ref{fig:Err_n200p50cor} shows that
the difference in classifier performance between the independent case and the dependent case is negligible 
in the skewed scenarios \textbf{LOGNORMAL} and \textbf{HETEROGENEOUS}.
In the  \textbf{T3} scenario, 
The performance of LDA is best and is greatly improved
over the case with independent variables.
This improvement is not surprising since the correlations
induce heterogeneous weights on variables for the LDA \cite[Equation 4.9]{tibs2009}.

\begin{figure}[H]
\begin{center}
\includegraphics[scale=0.6]{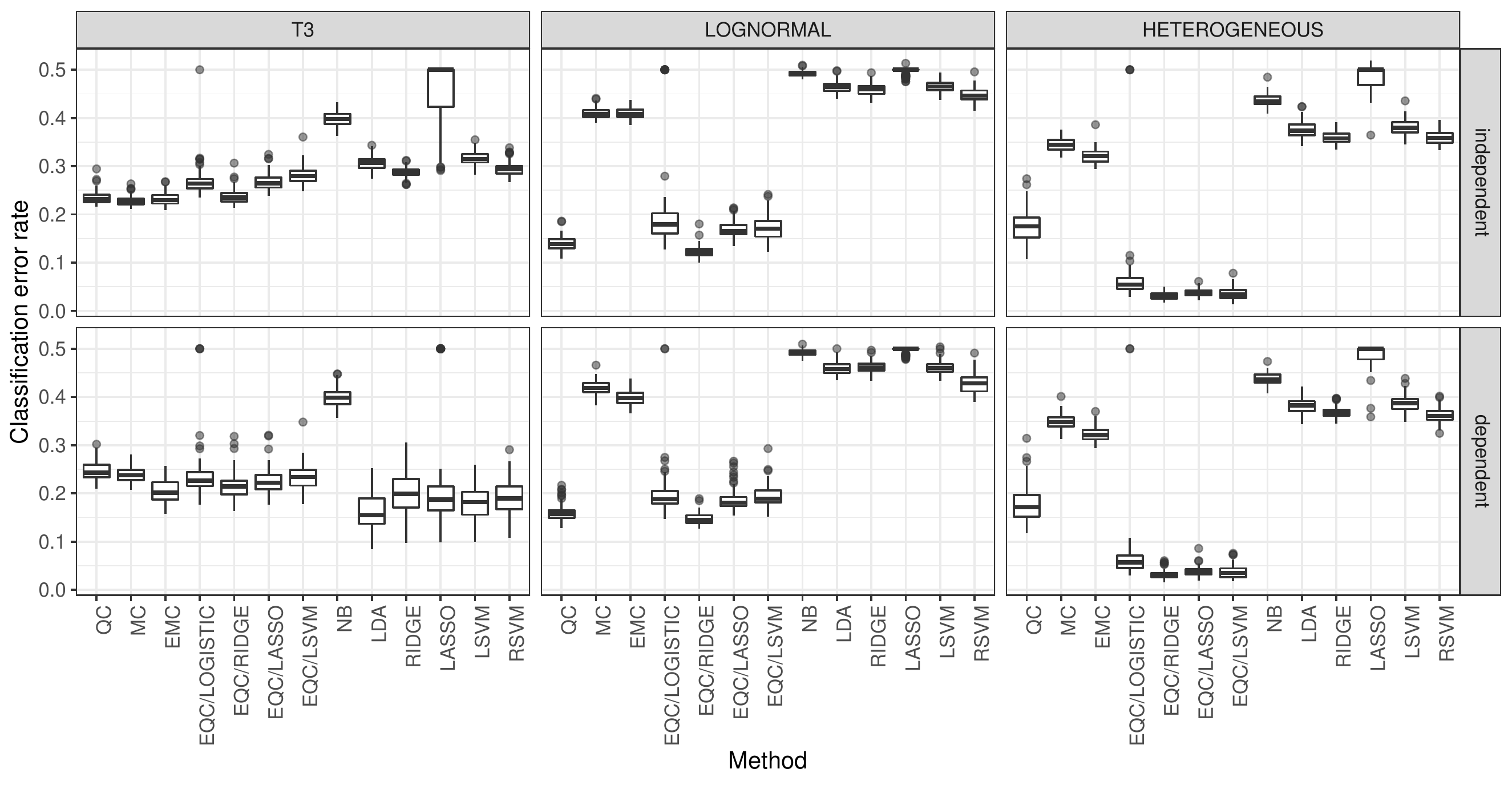}
\end{center}
\caption{
Comparison of test error rates with independent and correlated input variables
in the low dimensional scenario, $n=200, p=50$.
}
\label{fig:Err_n200p50cor}
\end{figure}

\subsection{Comparing EQC/RIDGE with QC for fixed \texorpdfstring{${\theta}$}{theta}}
\label{sec:thetaInfluence}

\ref{fig:theta} shows
the mean test rate of QC and EQC/RIDGE trained on a sample of size $n=100$ 
and evaluated over a grid for $\theta \in (0,1)$,
where 200 simulations for $10^4$ test samples for each parameter setting
and grid point were used.
The confidence limits are too narrow to show.
Looking along the first row of panels corresponding to the \textbf{T3} case,
the performance of EQC/RIDGE and QC is about the same for all $\theta$.
Since $\theta=0.5$ corresponds to the Bayes optimal median centroid \citep{HALL2009},
both QC and EQC/RIDGE provide optimal performance in the \textbf{T3} case
when $\theta=0.5$.
For the \textbf{LOGNORMAL} and \textbf{HETEROGENOUS} scenarios
EQC/RIDGE outperforms QC.
This figure demonstrates that the estimation
of a suitable $\theta$ is important in achieving a low test error rate.

\begin{figure}[H]
\begin{center}
\includegraphics[scale=0.65]{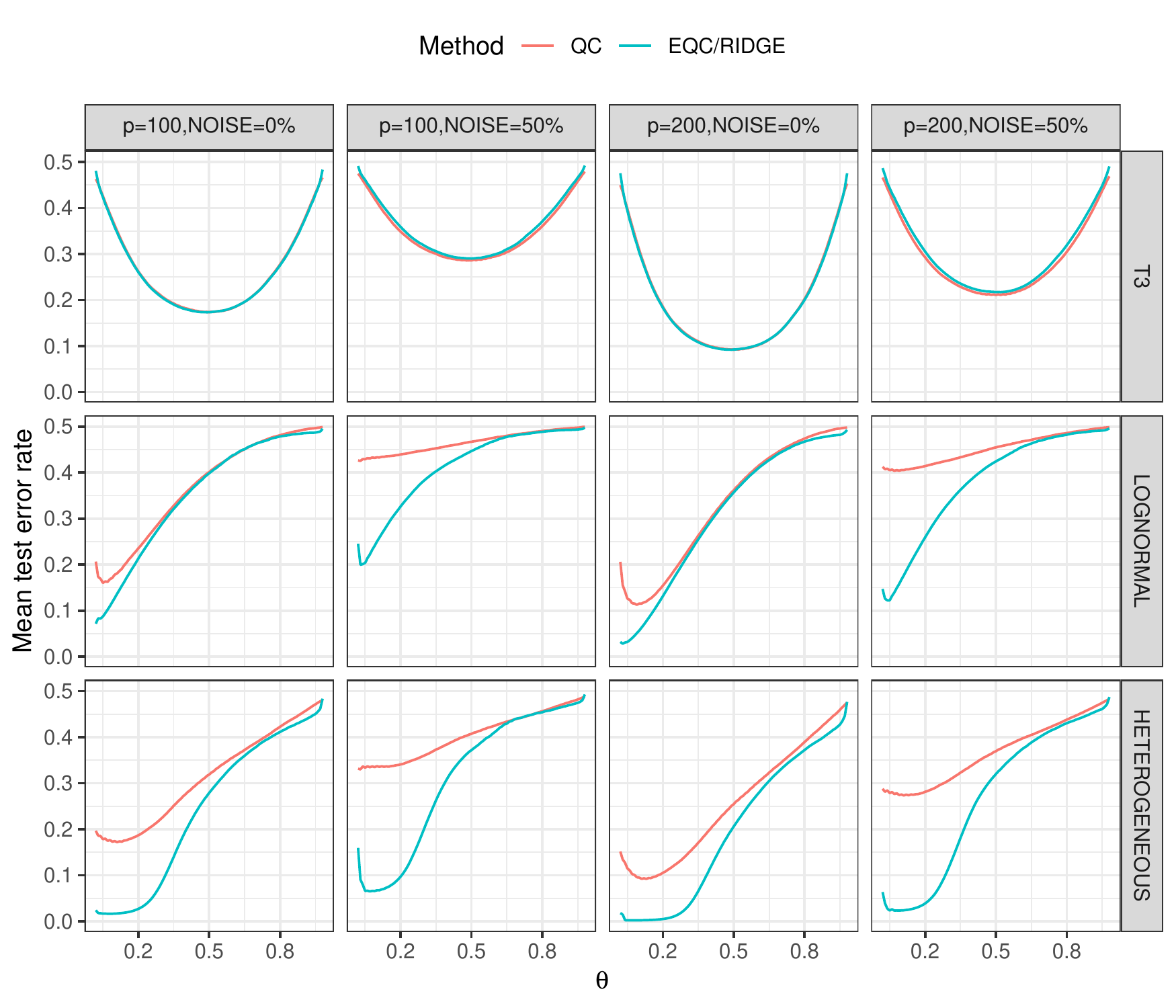}
\end{center}
\caption{Mean test error rates of the QC and  the EQC/RIDGE
against $\theta$ for fixed $n=100$, the three distributional scenarios,
the number of variables $p=100, 200$ and ${\rm NOISE} = 0\%, 50\%$.
}
\label{fig:theta}
\end{figure}

\section{Reuters-21578 text categorization}
\label{sec:application}

\subsection{Binary classification}
\label{sec:binaryReuter}
As in \citet{HALL2009} we used a subset of the Reuters-21578 text categorization
test collection \citep{Lewis1997, Sebastiani2002} to demonstrate
the usefulness of EQC and its improved performance over MC and QC.
The improved performance may be expected since this data set is high-dimensional,
sparse and the variables are highly skewed.

The subset contains two topics, ``acq'' and ``crude'', which can be found
from the R package \texttt{tm} \citep{tm}.
The subset has 70 observations (documents), where $n_1=50$ are of
the topic ``acq'' and $n_2=20$ are of the topic ``crude''.
The raw data set was preprocessed to first remove digits, punctuation marks,
extra white spaces, then convert to lower case, and
remove stop words and reduce to their stem.
It ended up with a $70\times 1517$ document-term matrix,
where a row represents a document and
a column represents a term, recording the frequency of a term.
A summary of the processed data set is shown in \ref{tab:ReuterSummary}.

The performance of a classifier was assessed by the mean
classification error rate estimated by 5 repetitions of
10-fold cross-validations with each fold containing $5$ documents of
the topic ``acq'' and $2$ documents of the topic ``crude''.

Since the performances of some classifiers such as the naive Bayes classifier
and the LDA could be much improved by using external feature selection strategies,
three external strategies for variable selection were investigated.
The first strategy was to use a subset of the data by removing low frequency
terms that appear in only one document, denoted by \texttt{removeLowFreq}.
This produced a $70\times 766$ document-term matrix.
The second and third strategies used Fisher's exact test to
select $L=50$ or $L=1000$ terms with the smallest p-values
within each fold of the cross-validation.

\begin{table}[H]
\caption{Summary of the Reuters-21578 subset.}
\begin{centering}
\label{tab:ReuterSummary}
\footnotesize
\begin{tabular}{ccccc}
\hline
\#Classes & \#Samples & \#Samples(acq) & \#Samples(crude) & \#Features\\
\hline
2 (acq vs crude) & 70 & 50 & 20 & 1517\\
\hline
\end{tabular}
\par\end{centering}
\end{table}

\ref{tab:Reuter} shows
the estimated error rates and their estimated standard errors for each classifier.
The second column indicates the situation where no external feature selection was used.
The four EQC methods, including the EMC,
performed the best even without any external feature selections,
followed by the QC and the MC.
It was found that most of the quantile-difference
transformed variables were constants, which can be removed.
This sparsity may explain the improved performance of the EQC family.

\begin{table}[H]
\centering
\caption{  Mean classification error rates,
and their standard errors in parentheses,
from 5 repetitions of 10-fold cross-validations for the
Reuters-21578 subset.}
\label{tab:Reuter}
\footnotesize
\begin{tabular}{lcccc}
\hline
\multirow{2}{*}{Method} & \multicolumn{4}{c}
{Classification Error Rates}\tabularnewline
\cline{2-5}
 & \multicolumn{1}{c}{Overall} & \multicolumn{1}{c}{RemoveLowFreq} &
 \multicolumn{1}{c}{$L=50$} & $L=1000$\tabularnewline
  \hline
QC & 0.069(0.013) & 0.06(0.012) & 0.049(0.01) & 0.063(0.012) \\
  MC & 0.06(0.012) & 0.063(0.014) & 0.054(0.012) & 0.063(0.014) \\
  EMC & \textbf{0.034}(0.01) & -- & -- & -- \\
  EQC/RIDGE & \textbf{0.034}(0.01) & -- &-- & -- \\
  EQC/LASSO & \textbf{0.037}(0.011) & -- & -- & -- \\
  EQC/LSVM & \textbf{0.034}(0.01) & -- & -- & -- \\
  NB & 0.714(0) & 0.714(0) & 0.117(0.015) & 0.134(0.015) \\
  LDA & 0.191(0.014) & 0.18(0.019) & 0.086(0.014) & 0.183(0.014) \\
  RIDGE & 0.203(0.012) & 0.186(0.012) & 0.097(0.013) & 0.203(0.012) \\
  LASSO & 0.051(0.013) & 0.049(0.013) & 0.066(0.014) & 0.049(0.013) \\
  LSVM & 0.109(0.013) & 0.1(0.013) & 0.091(0.014) & 0.1(0.013) \\
  RSVM & 0.217(0.012) & 0.203(0.016) & 0.097(0.014) & 0.191(0.018) \\
   \hline
\end{tabular}
\end{table}

\subsection{Multiclass classification}

To see how the EQC performs on the multiclass problem, a larger subset of Reuters-21578,
denoted by R8 \citep{phd-Ana-Cardoso-Cachopo} was tried.
This data set contains a training set and a test set that were obtained by
applying the modApte train/test split on the raw data \citep{Lewis1997}.
This resulted in retaining
8 classes with the highest number of positive training examples.
In order to classify those 8 classes,
the same preprocessing procedure as in \ref{sec:binaryReuter}
on the R8 data set was applied.
The terms were preprocessed to first remove digits, punctuation marks,
extra white spaces, then convert to lower case, and
remove stop words and reduce to their stem.
Terms that appeared in less than $0.5\%$ of documents were also removed, resulting in
a $5485\times 1367$ document-term matrix for training
and
a $2189\times 1367$ document-term matrix for testing.
The number of samples for each class is summarized in \ref{tab:R8Summary}.

Classifiers in the binary case were used but with the ridge logistic regression and
the LASSO logistic regression extended to the multinomial regressions,
and SVM extended to multi-class SVM by the one-against-one method \citep{tibs2009}.
\ref{tab:R8} shows the mean test error and the sensitivities of different classes.
The EQC still outperformed the other methods on the larger subset
while the EMC, the QC and the MC performed poorly this time.
With a much larger sample size,
the LSVM and the ridge multinomial regression were competitive
with EQC.

\begin{table}[H]
\caption{Summary of the Reuters-21578 subset R8 with 1367 features.}
\centering{}%
\label{tab:R8Summary}
\footnotesize
\begin{tabular}{lcc}
\hline
\multirow{2}{*}{Class} & \multicolumn{2}{c}{\#Samples}\\
\cline{2-3}
 & train  & test\\
\hline
acq & 1596 & 696 \\
  crude & 253 & 121 \\
  earn & 2840 & 1083 \\
  grain &  41 &  10 \\
  interest & 190 &  81 \\
  money-fx & 206 &  87 \\
  ship & 108 &  36 \\
  trade & 251 &  75 \\
\hline
Total & 5485 & 2189 \\
\end{tabular}
\end{table}

\begin{table}[H]
\centering
\caption{ Test error rate and sensitivities for
each multiclass classifier on the
Reuters-21578 subset R8.}
\label{tab:R8}
\footnotesize
\begin{tabular}{lccccccccc}
\hline
\multirow{2}{*}{Method} & \multirow{2}{*}{Test Error} & \multicolumn{8}{c}{Sensitivities}\\
\cline{3-10}
 &  & acq  & crude & earn & grain & interest & money-fx & ship & trade\\
\hline
  QC & 0.144 & 0.963 & 0.745 & 0.857 & 0.333 & 0.763 & 0.663 & 0.481 & 0.757 \\
  MC & 0.392 & 0.820 & 0.899 & 0.864 & 0.000 & 0.102 & 0.000 & 0.059 & 0.000 \\
  EMC & 0.309 & 0.750 & 0.925 & 0.875 & 1.000 & 0.700 & 0.316 & 0.091 & 0.908 \\
  EQC & \textbf{0.044} & 0.956 & 0.964 & 0.985 & 0.692 & 0.865 & 0.805 & 0.757 & 0.910 \\
  NB & 0.994 & 0.000 & 0.000 & 0.000 & 0.005 & 1.000 & 1.000 & 0.000 & 1.000 \\
  LDA & 0.109 & 0.896 & 0.883 & 0.906 & 1.000 & 0.732 & 0.709 & 0.759 & 0.906 \\
  RIDGE & 0.060 & 0.934 & 0.944 & 0.956 & 0.833 & 0.946 & 0.839 & 0.923 & 0.850 \\
  LASSO & 0.168 & 0.908 & 0.932 & 0.780 & 0.000 & 0.955 & 1.000 & 1.000 & 1.000 \\
  LSVM & 0.083 & 0.960 & 0.847 & 0.938 & 0.667 & 0.855 & 0.778 & 0.641 & 0.747 \\
  RSVM & 0.131 & 0.739 & 0.951 & 0.975 & 1.000 & 1.000 & 0.775 & 0.900 & 0.889 \\
\hline
\end{tabular}
\end{table}

\section{Discussion and conclusion}

The aim of the ensemble quantile classifier is to
derive a regularized weighted quantile-based classifier
that can best retain the advantage of QC on skewed inputs
and overcome the limitation of the QC with high-dimensional data
that includes noisy inputs.
The improvement using EQC has been demonstrated in simulation
experiments as well as with an application to text categorization.

We implemented the EQC methods in \cite{eqcGithub}, where
a vignette is available for
reproducing the simulations and
the Reuters text categorization application
in the paper.

\newpage
\section*{Acknowledgment}

The authors would like to thank the Associate Editor and the referees
for their insightful comments and suggestions,
which significantly improved the manuscript.
This research was supported by an NSERC Discovery Grant awarded to A. I. McLeod.
The simulations reported in this paper were made possible by the facilities of the Shared Hierarchical
Academic Research Computing Network (SHARCNET:www.sharcnet.ca) and Compute/Calcul Canada.

\appendix
\section*{Appendix}

\section{Relationship to asymmetric Laplace distribution}
\label{appendix:ASL}
A random variable $x$ is said to follow the asymmetric Laplace distribution, denoted as $x\sim\AL$,
if its probability density function
has the form,
\begin{equation}
\label{eq:ALpdf}
f(x)=\frac{\lambda}{\kappa+1/\kappa}
\begin{cases}
e^{\frac{\lambda}{k}(x-m)}, & \text{if }x<m\\
e^{-\lambda k(x-m)}, & \text{if }x\ge m
\end{cases},
\end{equation}
where $m\in \numset{R}$, $\lambda>0$ and $\kappa>0$ respectively are
the location, the scale and the skewness parameters.

Let $\pi_1$ and $\pi_2$ be the prior probabilities of $P_1$ and $P_2$.
If $P_1$ and $P_2$ consist of independent
asymmetric Laplace distribution with parameters $(\vec{m_1},\vec{\kappa},\vec{\lambda})$
and $(\vec{m_2},\vec{\kappa},\vec{\lambda})$,
then the Bayes decision boundary becomes $\set{\vec{x}}{s_{AL}(\vec{x})=0}$ with,
\begin{equation}
s_{AL}(\vec{x})=\log(\pi_2/\pi_1)+
\sum_{j=1}^p \lambda_{j} (\kappa_{j}+\kappa_{j}^{-1}) S_{al}(x_j,m_{1j},m_{2j},\kappa_{j},\lambda_{j}),
\label{eq:decisionBoundAL1}
\end{equation}
where for $m_1<m_2$,
\begin{align*}
S_{al}(x,m_{1},m_{2},\kappa,\lambda)=
\begin{cases}
-\frac{1}{\kappa^2+1} (m_2-m_1), & \text{if}\ x<m_{1}\\
x-\frac{\kappa^2}{\kappa^2+1} m_1
-\frac{1}{\kappa^2+1} m_2, & \text{if}\ m_{1} \le x <m_{2}\\
\frac{\kappa^2}{\kappa^2+1} (m_2-m_1), & \text{if}\ x\ge m_{2}
\end{cases}.
\end{align*}

Since $m$ is also the $[\kappa^2/(1+\kappa^2)]$-quantile of an asymmetric Laplace distribution,
if we let $\theta_j=\kappa_j^2/(1+\kappa_j^2)$, \ref{eq:decisionBoundAL1} will become the
$\mathsf{C}\bigl(\Qts(\vec{x}) \mid \beta_0,\vec{\beta}\bigr)$ of EQC in \ref{eq:EQCdecisionBound_linear} with
$\beta_0=\log(\pi_2/\pi_1)$ and
$\beta_j=\lambda_j\sqrt{[\theta_j(1-\theta_j)]^{-1}}$ for $j=1,\dotsc,p$.
If $x_j$'s are rescaled by its standard deviation
$\sqrt{1+\kappa^4}/(\lambda \kappa)$ first, then the
$\vec{\beta}$ will become
\begin{equation}
\beta_j=\frac{\sqrt{2}}{\theta_{j}(1-\theta_{j})}\sqrt{(\theta_{j}-\frac{1}{2})^2+\frac{1}{4}},
\text{ for } j=1,\dotsc,p.
\label{eq:ALw}
\end{equation}

Therefore, we can see that the decision boundary given by the EQC is
the Bayes decision boundary in this special case while QC cannot be
if $\vec{\kappa}$ is not homogeneous.

\section{Maximum likelihood estimation of multiclass EQC}
\label{appendix:MLEmulticlassEQC}
In this section, we formulate the log-likelihood function
of for the multiclass EQC in a matrix form
as well as its gradient vector and Hessian function.
This is useful for further investigation of theoretical properties and
the ease of computation.
At the end, we will show that the Hessian matrix is semi-negative-definite
so the log-likelihood function has a single, unique maximum.

Without loss of generality,
we let $\beta_{0,k}=0$ for all $k=1,\dotsc,K-1$
and disregard them.
Define
$\onevector_{K} = (1)_{1 \times K}$, $\onevector_{n} = (1)_{1 \times n}$,
$\mat{Y}=(y_{k,i})_{K \times n}$, where $y_{k,i}=1$ if $y_i=k$ and $0$ otherwise.
We also define,
\[
\mat{Q}_i =
\begin{pmatrix}
-\Qts^{(1,K)}(\vec{x}_i) \\
\vdots \\
-\Qts^{(K,K)}(\vec{x}_i)
\end{pmatrix},
\text{ for } i=1,\dotsc,n,
\]

\[
\mat{Q} =
\begin{pmatrix}
\mat{Q}_1 \\
\vdots \\
\mat{Q}_n
\end{pmatrix},
\]

\[
\vec{\beta} = (\beta_1,\dotsc,\beta_p)^{\transpose},
\]
and
\begin{align*}
\vec{C} &= \Bigl(C_{1}(\vec{\beta}),\dotsc,C_{n}(\vec{\beta})\Bigr)^{\transpose} \\
& =
\begin{pmatrix}
\onevector_{K} \exp[\mat{Q}_{1} \vec{\beta}]  \\
\vdots \\
\onevector_{K} \exp[\mat{Q}_{n} \vec{\beta}]
\end{pmatrix} \\
&= \mat{B}_{1} \exp[\mat{Q}\vec{\beta}],
\end{align*}
where $\exp[\cdot]$ is the entrywise exponential operation
and $\mat{B}_{1}=\Diag_{n}(\onevector_{K}, \dotsc, \onevector_{K})$ is a block diagonal matrix
consisting of $n$ repetitions of $\onevector_{K}$.

Then the log-likelihood function of EQC given $\vec{\theta}$ can be expressed,
\begin{align}
\loglikelihood(\vec{\beta}) &=
\Vectorize[\mat{Y}]^{\transpose} \mat{Q}\vec{\beta} -
\onevector_{n} \log[\vec{C}] \nonumber \\
&= \Vectorize[\mat{Y}]^{\transpose}\mat{Q} \vec{\beta} -
\onevector_{n} \log\Bigl[B_{1} \exp[\mat{Q} \vec{\beta}]\Bigr],
\label{eq:loglikMat}
\end{align}
where $\log[\cdot]$ is the entrywise natural logarithm operation
and $\Vectorize[\cdot]$ is a matrix vectorization operation which creates
a column vector by appending all columns of the matrix.

The gradient vector can be expressed,
\begin{equation}
\gradient \loglikelihood(\vec{\beta}) =
\Vectorize[\mat{Y}]^{\transpose}\mat{Q} -
\onevector_{n}
\Bigl[
\mat{A}
\entrydivide \mat{E}_{2}
\Bigr],
\label{eq:grloglikMat}
\end{equation}
where
\[
\mat{A} = \mat{B}_{1}\,\mat{U},
\]
\[
\mat{U} = \mat{Q} \entrydot \mat{E}_{1},
\]
$\entrydot$ stands for the entrywise multiplication
or the Hadamard product
and $\entrydivide$ stands for the entrywise division,
$\mat{E}_{1}$ is an $n K \times p$ matrix with $p$ repeated columns of
$\exp[\mat{Q} \vec{\beta}]$, which is,
\[
\mat{E}_{1}=\underbrace{
\begin{pmatrix}
\exp[\mat{Q} \vec{\beta}] &
\dotsc &
\exp[\mat{Q} \vec{\beta}]
\end{pmatrix}
}_{p \text{ repeated columns}},
\]
and $\mat{E}_{2}$ is an $n \times p$ matrix with $p$ repeated columns of
$\vec{C}=\mat{B}_{1} \exp[\mat{Q} \vec{\beta}]$, which is,
\[
\mat{E}_{2}=\underbrace{
\begin{pmatrix}
\vec{C} &
\dotsc &
\vec{C}
\end{pmatrix}
}_{p \text{ repeated columns}}.
\]
In particular, the $j$-th element of $\gradient \loglikelihood(\vec{\beta})$ is,
for $j=1,\dotsc,p$,
\[
\gradient_j \loglikelihood(\vec{\beta}) =
\Vectorize[\mat{Y}]^{\transpose}\mat{Q} \unitvector_{j}-
\onevector_{n}
\bigl[
\mat{A}_j
\entrydivide
C
\bigr],
\]
where
\[
\mat{A}_j = \mat{A} \unitvector_{j}=
\mat{B}_{1}\, \bigl[\mat{Q} \unitvector_{j} \entrydot \exp(\mat{Q} \vec{\beta})\bigr],
\]
$\unitvector_{j}$ is a unit column vector of length $p$ where the $j$-th element is $1$
and the other elements are $0$'s.

The $j$-th row of the Hessian matrix can be expressed as,
for $j=1,\dotsc,p$,
\begin{equation}
\gradient \bigl(\gradient_j \loglikelihood(\vec{\beta})\bigr) =
\mat{F}_{j} \mat{A} -
[\onevector_{n} \entrydivide \vec{C}^\transpose]\,\mat{B}_{1} \mat{G}_{j},
\label{eq:HessianloglikMatrow}
\end{equation}
where
\[
\mat{F}_{j} = \bigl[ \mat{A}_{j} \entrydivide [\vec{C} \entrydot \vec{C}] \bigr]^{\transpose} ,
\]
and
\[
\mat{G}_{j} =
\mat{U} \entrydot \mat{E}_{3,j},
\]
and
\[
\mat{E}_{3,j}=\underbrace{
\begin{pmatrix}
\mat{Q}\unitvector_{j} &
\dotsc &
\mat{Q}\unitvector_{j}
\end{pmatrix}
}_{p \text{ repeated columns}}.
\]
Then the Hessian matrix can be expressed,
\begin{equation}
\gradient^2 \loglikelihood(\vec{\beta}) =
\begin{pmatrix}
\mat{F}_{1}\\
\vdots \\
\mat{F}_{p}
\end{pmatrix}
\mat{A} -
\mat{B}_{2}
\begin{pmatrix}
\mat{G}_{1}\\
\vdots \\
\mat{G}_{p}
\end{pmatrix}
\label{eq:HessianloglikMat}
\end{equation}
where $\mat{B}_{2}=\Diag_{p}\bigl(
[\onevector_{n} \entrydivide \mat{C}^\transpose]\,\mat{B}_{1}, \dotsc,
[\onevector_{n} \entrydivide \mat{C}^\transpose]\,\mat{B}_{1}\bigl)$
is a block diagonal matrix consists of $p$ repetitions of
$[\onevector_{n} \entrydivide \mat{C}^\transpose]\,\mat{B}_{1}$.

If we let $\mat{W}_1=\Diag_{n K}\bigl(\exp(\mat{Q} \vec{\beta})\bigr)$,
$\mat{W}_2=\Diag_{n}(\onevector_{n} \entrydivide \mat{C}^\transpose)$,
and
\[
\mat{W}_{3}=\Diag_{n K}(
\underbrace{\frac{1}{C_1(\vec{\beta)}},\dotsc,\frac{1}{C_1(\vec{\beta)}}}_{K}
,\dotsc,
\underbrace{\frac{1}{C_n(\vec{\beta)}},\dotsc,\frac{1}{C_n(\vec{\beta)}}}_{K}
),
\]
then $\mat{A}$ can be expressed,
\[
\mat{A}=\mat{B}_{1} \mat{W}_{1} \mat{Q},
\]
and \ref{eq:grloglikMat} and \ref{eq:HessianloglikMat} can also be expressed,
\[
\gradient \loglikelihood(\vec{\beta}) =
\Vectorize[\mat{Y}]^{\transpose}\mat{Q} -
\onevector_{n}
\mat{W}_{2} \mat{A},
\]
and
\begin{align*}
\gradient^2 \loglikelihood(\vec{\beta}) & =
\mat{A}^\transpose \mat{W}_{2}^{2} \mat{A} -
\mat{Q}^\transpose \mat{W}_{3} \mat{W}_{1} \mat{Q} \\
& =
\mat{Q}^\transpose \mat{W}_{1}^\transpose
\bigl[
 \mat{B}_{1}^\transpose \mat{W}_{2}^{2} \mat{B}_{1} -
 \mat{W}_{1}^{-1} \mat{W}_{3}
\bigr]
\mat{W}_{1} \mat{Q}.
\end{align*}

In particular,
$\bigl[
 \mat{B}_{1}^\transpose \mat{W}_{2}^{2} \mat{B}_{1} -
 \mat{W}_{1}^{-1} \mat{W}_{3}
\bigr]$
is a negative semi-definite diagonal matrix as
its eigenvalues are all non-positive.
We can then conclude that the Hessian
of $\loglikelihood(\vec{\beta})$ is a negative
semi-definite matrix
and so is its L2 regularized version.

\section{Proof of the consistency of estimating EQC}
\label{appendix:proof}

Without loss of generality, we set $\beta_0=0$ and
disregard it
in the following discussions.

\begin{proof}[Proof of \ref{thm:consitencyClassifier}]
For abbreviation, denote $\vec{\eta}=(\vec{\theta},\vec{\beta})$.
From the continuity implied by \ref{lem:continuityQuanDis} later on,
we only need to show the following converges to zero,
\begin{equation}
\abs{\Psi(\tilde{\vec{\eta}})-\Psi(\hat{\vec{\eta}}_n)}\le
\abs{\Psi(\tilde{\vec{\eta}})-\Psi_n(\tilde{\vec{\eta}})}+
\abs{\Psi_n(\tilde{\vec{\eta}})-\Psi_n(\hat{\vec{\eta}}_n)}+
\abs{\Psi_n(\hat{\vec{\eta}_n})-\Psi(\hat{\vec{\eta}}_n)}.
\label{eq:upBound}
\end{equation}

By \ref{lem:convergeEmp} later on,
under \ref{assumption:continuousQuantile,assumption:boundedWeight},
$\forall \epsilon>0$,
\begin{equation}
\lim_{n\rightarrow\infty}\prob\{
\sup_{\vec{\eta}\in \numset{S}}
\abs{\Psi_n(\vec{\eta})-\Psi(\vec{\eta})}
> \epsilon\}=0,
\label{eq:convergeEmp}
\end{equation}
where $\numset{S} \subset \ointerval{0}{1}^p \times \numset{R}^{p}$.

So \ref{eq:convergeEmp} forces the first and the third term of
the right hand side of
\ref{eq:upBound} to converge to 0 in probability.

Consider the second term now.
By definitions of $\tilde{\vec{\eta}}$ and $\hat{\vec{\eta}}$,
\[
\Psi(\tilde{\vec{\eta}})\ge \Psi(\hat{\vec{\eta}}_n),\
\Psi_n(\tilde{\vec{\eta}})\le \Psi_n(\hat{\vec{\eta}}_n).
\]
So
\begin{align*}
\abs{\Psi_n(\tilde{\vec{\eta}})-\Psi_n(\hat{\vec{\eta}}_n)}=&
\Psi_n(\hat{\vec{\eta}}_n)-\Psi_n(\tilde{\vec{\eta}}) \\
= & [\Psi_n(\hat{\vec{\eta}}_n)-\Psi(\hat{\vec{\eta}}_n)]+
[\Psi(\hat{\vec{\eta}}_n)-\Psi_n(\tilde{\vec{\eta}})] \\
\le & [\Psi_n(\hat{\vec{\eta}}_n)-\Psi(\hat{\vec{\eta}}_n)]+
[\Psi(\tilde{\vec{\eta}})-\Psi_n(\tilde{\vec{\eta}})].
\end{align*}

Using \ref{eq:convergeEmp} again, then both
$\abs{\Psi_n(\hat{\vec{\eta}}_n)-\Psi(\hat{\vec{\eta}}_n)}$ and
$\abs{\Psi(\tilde{\vec{\eta}})-\Psi_n(\tilde{\vec{\eta}})}$
will converge to zero in probability.
This makes $\abs{\Psi_n(\tilde{\vec{\eta}})-\Psi_n(\hat{\vec{\eta}}_n)}$
converge to zero in probability.
Therefore, \ref{eq:upBound} converges to zero in probability.
That is,
\[
\lim_{n\rightarrow\infty}\prob\{
\abs{
\Psi(\tilde{\vec{\eta}})-
\Psi(\hat{\vec{\eta}}_n)
}>\epsilon
\}=0.
\]
\end{proof}

\begin{proof}[Proof of \ref{thm:consitencyEmpClassifier}]
For abbreviation, denote $\vec{\eta}=(\vec{\theta},\vec{\beta})$.
We will investigate
\begin{equation}
\abs{\Psi(\tilde{\vec{\eta}})-\Psi_n(\hat{\vec{\eta}}_n)}\le
\abs{\Psi(\tilde{\vec{\eta}})-\Psi(\hat{\vec{\eta}}_n)}+
\abs{\Psi(\hat{\vec{\eta}}_n)-\Psi_n(\hat{\vec{\eta}}_n)}.
\label{eq:upBound2}
\end{equation}

By \ref{thm:consitencyClassifier}, the first term of the right hand side above converges to zero in probability.
By \ref{lem:convergeEmp}, the second term of the right hand side above also converges to zero in probability.
Therefore, $\abs{\Psi(\tilde{\vec{\eta}})-\Psi_n(\hat{\vec{\eta}}_n)}$ converges to zero in probability and we complete the proof.
\end{proof}

\begin{lem}
\label{lem:continuityQuanDis}
Under the assumption that $\theta_{1}\le\theta_{2}$ and $q_{1}\le q_{2}$, or
$\theta_{1}\ge\theta_{2}$ and $q_{1}\ge q_{2}$,
the following inequality holds,
\[
\abs{\rho_{\theta_1}(x_j - q_{1}(\theta_1))-
\rho_{\theta_2}(x_j - q_{2}(\theta_2))}\le
\abs{x_j}\abs{\theta_2-\theta_1}+4\abs{q_{1}-q_{2}}
,\ j=1,\dotsc,p,
\]
which further implies the continuity of
$\mathsf{C}(\Qts(\vec{x}) \mid \vec{\beta})$ under \ref{assumption:continuousQuantile,assumption:boundedWeight},
and hence the continuity of $\Psi(\vec{\theta},\vec{\beta})$.
\end{lem}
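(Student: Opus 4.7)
The plan is to prove the lemma in three phases: first establish the pointwise inequality by decomposing the difference and exploiting the Lipschitz structure of $\rho_\theta$; second, deduce joint continuity of $\mathsf{C}(\Qts(\vec{x})\mid\beta_0,\vec{\beta})$ in the parameters; third, lift pointwise continuity to continuity of the population loss $\Psi$ via dominated convergence.

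For the first phase, I would telescope via the intermediate value $\rho_{\theta_2}(x_j - q_1)$, writing the LHS as $[\rho_{\theta_1}(x_j - q_1) - \rho_{\theta_2}(x_j - q_1)] + [\rho_{\theta_2}(x_j - q_1) - \rho_{\theta_2}(x_j - q_2)]$. The second bracket is at most $\abs{q_1 - q_2}$ because $\rho_{\theta_2}$ has slope $\theta_2$ on the positive half-line and $\theta_2 - 1$ on the negative half-line, both of magnitude at most $1$, so $\rho_{\theta_2}$ is $1$-Lipschitz. The first bracket equals exactly $(x_j - q_1)(\theta_1 - \theta_2)$ via the identity $\rho_{\theta_1}(u) - \rho_{\theta_2}(u) = u(\theta_1 - \theta_2)$, which holds because the indicator $\indicator{u<0}$ does not depend on $\theta$. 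Triangle-inequalizing $\abs{x_j - q_1} \le \abs{x_j} + \abs{q_1}$ and then using the ordering assumption $q_1 \le q_2$ in a short case analysis on the position of $x_j$ relative to $[q_1, q_2]$ absorbs the residual $\abs{q_1}\abs{\theta_1 - \theta_2}$ into multiples of $\abs{q_1 - q_2}$, yielding the stated factor $4$.

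Once the numerical inequality is in hand, continuity of each coordinate of $\Qts(\vec{x})$ in $\vec{\theta}$ follows immediately: \ref{assumption:continuousQuantile} gives $q_{k,j}(\theta_j') \rightarrow q_{k,j}(\theta_j)$ as $\theta_j' \rightarrow \theta_j$, and the inequality then forces $\rho_{\theta_j'}(x_j - q_{k,j}(\theta_j')) \rightarrow \rho_{\theta_j}(x_j - q_{k,j}(\theta_j))$. \ref{assumption:boundedWeight} supplies differentiability and hence continuity of $\mathsf{C}$ in $(\vec{z}, \beta_0, \vec{\beta})$, so composition yields pointwise (in $\vec{x}$) joint continuity of $\mathsf{C}(\Qts(\vec{x})\mid\beta_0,\vec{\beta})$ in all parameters.

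To pass from pointwise to integrated continuity of $\Psi$, I would apply dominated convergence to each integral defining $\Psi$ in \ref{eq:BinomialDeviance}. A dominating function is obtained by combining the established inequality with the boundedness of $\tilde{\vec{\beta}}$ from \ref{assumption:boundedWeight}, together with the soft-plus bound $\log(1+e^t) \le \abs{t} + \log 2$; integrability of the dominator then reduces to finite first moments of $\vec{x}$ under $P_1$ and $P_2$, which is part of the standing model assumptions. The main obstacle I anticipate is the case analysis in the first phase: several sign configurations of $x_j$, $q_1$, $q_2$ must each be handled in order to land on the specific constant $4$, and confirming that the ordering hypothesis is used in the right place is the delicate point; the remaining steps are routine.
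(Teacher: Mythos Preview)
The paper does not actually prove the displayed inequality; it simply invokes Lemma~3 from the supplement of \citet{Hennig2016}. Your telescoping argument is therefore already more than what the paper offers, and for the lemma's real content --- continuity of $\mathsf{C}(\Qts(\vec{x})\mid\vec{\beta})$ and of $\Psi$ --- your second and third phases coincide with the paper's reasoning (differentiability and boundedness of $\mathsf{C}$ on a bounded parameter set, then dominated convergence) essentially line for line.

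The place to be careful is precisely where you flag the ``delicate point''. Your telescoping legitimately yields
\[
\abs{\rho_{\theta_1}(x_j-q_1)-\rho_{\theta_2}(x_j-q_2)}\le \abs{x_j-q_1}\,\abs{\theta_1-\theta_2}+\abs{q_1-q_2},
\]
and after the triangle inequality the residual term is $\abs{q_1}\,\abs{\theta_1-\theta_2}$. This is \emph{not} dominated by any fixed multiple of $\abs{q_1-q_2}$: take $q_1=q_2$ large and $\theta_1\ne\theta_2$ (which the ordering hypothesis allows, and which occurs for any distribution with an atom), so the residual is positive while $\abs{q_1-q_2}=0$. No case split on the position of $x_j$ relative to $[q_1,q_2]$ can repair this, so the absorption you sketch will not produce the constant~$4$. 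This does not damage the downstream continuity conclusions --- the bound you actually establish, with $(\abs{x_j}+\abs{q_1})\abs{\theta_1-\theta_2}$ in place of $\abs{x_j}\abs{\theta_1-\theta_2}$, already forces pointwise continuity once \ref{assumption:continuousQuantile} supplies $q_{k,j}(\theta_j')\to q_{k,j}(\theta_j)$, and that is all the rest of the argument needs. So either quote Hennig--Viroli for the sharp form (as the paper does) or state and use the weaker inequality your argument genuinely proves.
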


\begin{proof}
The inequality follows directly from  the Lemma 3 in the supplementary material of \citet{Hennig2016}.

It implies that the quantile-based transformation $\Qts(\vec{x})$ is a continuous function of $\vec{\theta}$.
Furthermore, since empirical quantiles are strongly consistent,
$\vec{\beta}$ is bounded and
$\mathsf{C}(\vec{z} \mid \vec{\beta})$ is required to be
differentiable with respect to $\vec{z}$ and $\vec{\beta}$
by \ref{assumption:boundedWeight},
then $\mathsf{C}(\Qts(\vec{x}) \mid \vec{\beta})$ is bounded and a continuous function
of $\vec{\theta}$ and $\vec{\beta}$.
So the dominated convergence theorem still makes the integrals of
the differentiable transformation of $\mathsf{C}(\Qts(\vec{x}) \mid \vec{\beta})$
continuous with respect to $\vec{\theta}$ and $\vec{\beta}$.

\end{proof}

\begin{lem}
\label{lem:convergeEmp}
Under \ref{assumption:continuousQuantile,assumption:boundedWeight},
$\forall \epsilon>0$,
\[
\lim_{n\rightarrow\infty}\prob\{
\sup_{\vec{\eta}\in \numset{S}}
\abs{\Psi_n(\vec{\eta})-\Psi(\vec{\eta})}
> \epsilon\}=0,
\]
where $\vec{\eta}=(\vec{\theta},\vec{\beta})$ and
$\numset{S} \subset  \ointerval{0}{1}^p \times \numset{R}^{p}$,
\end{lem}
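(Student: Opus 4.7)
The plan is to reduce the claim to two standard ingredients: uniform consistency of the sample quantile functions in $\theta$, and a uniform law of large numbers for a family of bounded continuous functions indexed by $(\vec{\theta},\vec{\beta})$ ranging over a compact set $\numset{S}$. Throughout I assume $\numset{S}$ is taken to be compact in $\ointerval{0}{1}^p\times\numset{R}^p$ (this is consistent with \ref{assumption:boundedWeight}, which controls $\tilde{\vec{\beta}}$ by a constant $C$; we simply enlarge $\numset{S}$ so that it contains a neighbourhood of $\tilde{\vec{\eta}}$ and the estimator $\hat{\vec{\eta}}_n$ lies in $\numset{S}$ with probability tending to one).

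First I would split the error into the two natural pieces by inserting the oracle loss $\Psi_n^*(\vec{\eta})$ defined exactly as $\Psi_n(\vec{\eta})$ but with the true quantiles $q_{k,j}(\theta_j)$ replacing the empirical quantiles $\hat{q}_{k,j}(\theta_j)$. Then
\[
\sup_{\vec{\eta}\in\numset{S}}\abs{\Psi_n(\vec{\eta})-\Psi(\vec{\eta})}
\le \sup_{\vec{\eta}\in\numset{S}}\abs{\Psi_n(\vec{\eta})-\Psi_n^{*}(\vec{\eta})}
  + \sup_{\vec{\eta}\in\numset{S}}\abs{\Psi_n^{*}(\vec{\eta})-\Psi(\vec{\eta})}.
\]
The first term captures the effect of estimating the quantiles, and the second term is a pure empirical-process / ULLN term.

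For the first term, I would use \ref{lem:continuityQuanDis} to dominate $\abs{\Qts(\vec{x}_i)-\Qtsn(\vec{x}_i)}_j$ by $4\abs{q_{k,j}(\theta_j)-\hat q_{k,j}(\theta_j)}$ (the $\theta$ does not change here). Compactness of the $\theta$-range and continuity of $q_{k,j}(\cdot)$ (\ref{assumption:continuousQuantile}) combined with the classical uniform consistency of the sample quantile process give $\sup_{\theta_j\in\Theta_j}\abs{\hat q_{k,j}(\theta_j)-q_{k,j}(\theta_j)}\to 0$ almost surely. Because the quantile-difference transformation has the bounded range shown in \ref{fig:Qtransform}, and $\vec{\beta}$ is bounded on $\numset{S}$, the map $\mathsf{C}(\Qts(\vec{x})\mid\beta_0,\vec{\beta})$ stays in a compact range uniformly in $\vec{\eta}\in\numset{S}$ and in $i$; on such a range the function $z\mapsto\log(1+e^z)$ and its derivative are bounded. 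Differentiability of $\mathsf{C}$ (\ref{assumption:boundedWeight}) then lets me bound the first supremum by a constant multiple of $\max_{k,j}\sup_{\theta_j}\abs{\hat q_{k,j}(\theta_j)-q_{k,j}(\theta_j)}$, which tends to zero almost surely.

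For the second term I would apply a standard uniform law of large numbers. The summands
\[
g(\vec{x},y;\vec{\eta})=(y-1)\,\mathsf{C}\bigl(\Qts(\vec{x})\mid\beta_0,\vec{\beta}\bigr)-\log\bigl(1+e^{\mathsf{C}(\Qts(\vec{x})\mid\beta_0,\vec{\beta})}\bigr)
\]
are continuous in $\vec{\eta}$ for each $(\vec{x},y)$ (by \ref{lem:continuityQuanDis} and the smoothness hypothesis on $\mathsf{C}$), and dominated by an integrable envelope on compact $\numset{S}$ (again because the quantile-difference is bounded for each fixed $\vec{\theta}$ and $\vec{\beta}$ is bounded). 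A classical Glivenko--Cantelli argument (pointwise LLN plus continuity plus compactness of $\numset{S}$, for instance the bracketing criterion) then yields uniform convergence of $\Psi_n^{*}$ to $\Psi$ on $\numset{S}$ almost surely, hence in probability.

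The main obstacle I anticipate is the first piece: making sure that the uniformity is genuinely over the joint parameter $\vec{\eta}=(\vec{\theta},\vec{\beta})$ and not just over $\vec{\theta}$ alone. The key observation that makes this work is that the quantile-difference transform is uniformly bounded in $\vec{x}$ once $\vec{\theta}$ lies in a compact subset of $\ointerval{0}{1}^p$ and the true quantiles are continuous in $\theta$; combined with boundedness of $\vec{\beta}$ on $\numset{S}$, this makes $\mathsf{C}$ and $\log(1+e^{\mathsf{C}})$ uniformly Lipschitz in the quantile inputs, so the uniform quantile consistency translates directly into the needed uniform convergence of $\Psi_n-\Psi_n^{*}$.
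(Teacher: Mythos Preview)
Your proposal is correct but takes a genuinely different route from the paper. The paper argues by contradiction: assuming the uniform convergence fails, it uses boundedness of $\vec{\eta}$ (from \ref{assumption:boundedWeight}) to extract a convergent subsequence $\vec{\eta}^*_m\to\vec{\eta}^*$ along which $\abs{\Psi_m(\vec{\eta}^*_m)-\Psi(\vec{\eta}^*_m)}$ stays bounded away from zero in probability, then splits this via a triangle inequality through $\Psi_m(\vec{\eta}^*)$ and $\Psi(\vec{\eta}^*)$ into three pieces, each shown to vanish almost surely using, respectively, continuity of $\Psi$, the pointwise strong law at the fixed limit $\vec{\eta}^*$ together with an oracle loss $\Psi^*_m$, and the uniform strong consistency of empirical quantiles from \citet{Mason1982}. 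You instead work directly: insert the same oracle $\Psi_n^*$ at the outset and bound the supremum by two uniform terms, handling the first by a Lipschitz argument plus uniform quantile consistency and the second by a Glivenko--Cantelli argument for a continuous, compactly indexed, bounded-envelope class. Your route is the standard empirical-process approach and is conceptually cleaner; the paper's subsequence trick is more elementary in that it avoids invoking a uniform law of large numbers, reducing everything to pointwise convergence at the single limit $\vec{\eta}^*$, at the cost of a slightly less transparent argument. Both proofs ultimately rest on the same ingredients---compactness of $\numset{S}$, uniform consistency of empirical quantiles, and the uniform boundedness of the quantile-difference transform noted in \ref{fig:Qtransform}---so neither yields a stronger conclusion.
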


\begin{proof}
Assuming that the conclusion does not hold,
since $\vec{\eta}$ is bounded according to \ref{assumption:boundedWeight}, then
$\exists \epsilon >0$, $\delta >0$, there is a convergent subsequence
$\sequence[m]{\vec{\eta}^{*}_{m}}$ with limit $\vec{\eta}^*=\lim_{m\rightarrow\infty}\vec{\eta}^{*}_{m}$
such that for $m=1,\dotsc$,
\begin{equation}
\label{eq:oppResult}
\prob\{
\abs{\Psi_{m}(\vec{\eta}_{m}^{*})-\Psi(\vec{\eta}_{m}^{*})}
> \epsilon\}\ge \delta.
\end{equation}

Consider
\begin{equation}
\label{eq:upBoundsub}
\abs{\Psi_{m}(\vec{\eta}^{*}_{m})-\Psi(\vec{\eta}^{*}_{m})} \le
\abs{\Psi_{m}(\vec{\eta}^{*}_{m})-\Psi_{m}(\vec{\eta}^*)}+
\abs{\Psi_{m}(\vec{\eta}^*)-\Psi(\vec{\eta}^*)}+
\abs{\Psi(\vec{\eta}^*)-\Psi(\vec{\eta}^{*}_{m})}.
\end{equation}
Firstly, continuity of $\Psi(\vec{\eta})$ implies that
the third term of the right side
of \ref{eq:upBoundsub} converges to $0$ as $m\rightarrow\infty$.

Consider the second term,
we define a new $\Psi_n$ with the true quantiles below,
where the empirical decision rule $\mathsf{C}(\Qtsn(\vec{x}) \mid \vec{\beta})$
in \ref{eq:EBinomialDeviance}
is replaced by the population decision rule $\mathsf{C}(\Qts(\vec{x}) \mid \vec{\beta})$.
\[
\Psi_{m}^{*}(\vec{\eta})=-\frac{1}{m} \sum_{i=1}^{m}
\Bigl\{ (y_i-1) \mathsf{C}\bigl(\Qts(\vec{x}_i) \mid \vec{\beta}\bigr) -
\log (1+e^{ \mathsf{C}(\Qts(\vec{x}_i) \mid \vec{\beta}) })\Bigr\}.
\]

Consider
\[
\abs{\Psi_{m}(\vec{\eta}^*)-\Psi(\vec{\eta}^*)} \le
\abs{\Psi_{m}(\vec{\eta}^*)-\Psi^{*}_{m}(\vec{\eta}^*)}+
\abs{\Psi^{*}_{m}(\vec{\eta}^*)-\Psi(\vec{\eta}^*)}.
\]
Following the strong law of large numbers,
$\abs{\Psi^{*}_{m}(\vec{\eta}^*)-\Psi(\vec{\eta}^*)} \almscon 0$
as $m \rightarrow \infty$.

Since empirical quantiles are strongly consistent and
$\mathsf{C}(\vec{z} \mid \vec{\beta})$ is required to be
differentiable with respect to $\vec{z}$ and $\vec{\beta}$
by \ref{assumption:boundedWeight},
then $\abs{\mathsf{C}\bigl(\Qtsn[\vec{\theta}^{*}](\vec{x}) \mid \vec{\beta}^{*}\bigr)-
\mathsf{C}\bigl(\Qts[\vec{\theta}^{*}](\vec{x}) \mid \vec{\beta}^{*}\bigr)} \almscon 0$
as $m \rightarrow \infty$,
and hence
$\abs{\Psi_{m}(\vec{\eta}^*)-\Psi^{*}_{m}(\vec{\eta}^*)}\almscon 0$
as $m \rightarrow \infty$.

Now consider the first term of the right hand side of  \ref{eq:upBoundsub}.
Firstly, for $j=1,\dotsc,p$,
\[
\abs{\hat{q}_{k,j,m}(\theta_{j,m}^{*})-\hat{q}_{k,j,m}(\theta_{j}^*)}\le
\abs{\hat{q}_{k,j,m}(\theta_{j,m}^{*})-q_{k,j}(\theta_{j,m}^{*})}+
\abs{q_{k,j}(\theta_{j,m}^{*})-\hat{q}_{k,j}(\theta_{j}^*)}+
\abs{q_{k,j}(\theta_{j}^*)-\hat{q}_{k,j,m}(\theta_{j}^*)}.
\]
From Theorem 3 in \citet{Mason1982} and \ref{assumption:continuousQuantile}, all terms on the right side of
the above inequality converge to zero almost surely, and hence
$\abs{\hat{q}_{k,j,m}(\theta_{j,m}^{*})-\hat{q}_{k,j,m}(\theta_{j}^*)}
\almscon 0$
as $m \rightarrow \infty$ for $j=1,\dotsc,p$.
Thus $||\Qtsn[\vec{\theta}^{*}_{m}](\vec{x})-\Qtsn[\vec{\theta}^{*}](\vec{x}) ||
\almscon 0$ as $m \rightarrow \infty$, where $||\cdot||$ represents L2 norm.

Furthermore, since $\mathsf{C}(\vec{z} \mid \vec{\beta})$ is required to be
differentiable with respect to $\vec{z}$ and $\vec{\beta}$
by \ref{assumption:boundedWeight},
then $\abs{\mathsf{C}\bigl(\Qtsn[\vec{\theta}^{*}_{m}](\vec{x}) \mid \vec{\beta}^{*}_{m}\bigr)-
\mathsf{C}\bigl(\Qtsn[\vec{\theta}^{*}](\vec{x}) \mid \vec{\beta}\bigr)} \almscon 0$
as $m \rightarrow \infty$.
Thus the first term of the right hand side of
\ref{eq:upBoundsub} converges to zero almost surely,
$\abs{\Psi_{m}(\vec{\eta}_{m}^{*})-\Psi_{m}(\vec{\eta}^{*})} \almscon 0$ as $m \rightarrow \infty$.

To sum up, $\abs{\Psi^{*}_{m}(\vec{\eta}_{n_i})-\Psi(\vec{\eta}^{*}_{m})} \almscon 0$, which is contradictory to
\ref{eq:oppResult} and hence we conclude that,
under \ref{assumption:continuousQuantile,assumption:boundedWeight},
$\forall \epsilon>0$,
\[
\lim_{n\rightarrow\infty}\prob\{
\sup_{\vec{\eta} \in \numset{S}}
\abs{\Psi_n(\vec{\eta})-\Psi(\vec{\eta})}
> \epsilon\}=0,
\]
where $\numset{S} \subset \ointerval{0}{1}^p \times \numset{R}^{p}$.
\end{proof}

\newpage
\section{Misclassification rates}
\label{appendix:otherDetail}

In \ref{sec:simInterpret},
we presented boxplots of the test misclassification error rates
for each classifier.
Their averages and standard deviations are tabulated in
\crefrange{tab:simTabT3}{tab:simTabheterogeneousdep},
for the T3, LOGNORMAL and HETEROGENEOUS distribution cases
with independent or dependent variables.

\begin{table}[H]
\centering
\caption{ Simulation study:
the mean test classification error rates,
and their standard errors in parentheses,
of each method for the  \textbf{independent T3} scenario.
All numbers are in percentages and rounded to one digit.
The third line indicates the percentage of irrelevant variables within $p$ variables.}
\label{tab:simTabT3}
\scriptsize
\begin{tabular}{llllllllll}
\hline
\multicolumn{10}{c}{$n=100$} \\
 \hline & \multicolumn{3}{c}{$p=50$} & \multicolumn{3}{c}{$p=100$} & \multicolumn{3}{c}{$p=200$}\\
 \hline
   & $0\%$ & $50\%$ & $90\%$ & $0\%$ & $50\%$ & $90\%$  & $0\%$ & $50\%$ & $90\%$\\
  QC & 27.1(0.3) & 35.9(0.4) & 47.2(0.2) & 18.5(0.2) & 30.4(0.3) & 46(0.2) & 10(0.1) & 22.3(0.3) & 43.9(0.3) \\
  MC & 25.5(0.1) & 33.9(0.2) & 46(0.2) & 17.6(0.1) & 28.5(0.2) & 44.4(0.1) & 9.1(0.1) & 20.8(0.1) & 42.2(0.1) \\
  EMC & 26.1(0.2) & 35(0.2) & 46.6(0.2) & 18.2(0.2) & 29.7(0.2) & 45.2(0.2) & 9.5(0.1) & 21.9(0.2) & 43(0.2) \\
  EQC/LOGISTIC & 32.1(0.4) & 40.3(0.4) & 47.9(0.2) & 24.7(0.3) & 35.4(0.3) & 47.6(0.2) & - & - & - \\
  EQC/RIDGE & 28.4(0.4) & 37.9(0.5) & 48(0.2) & 19.4(0.3) & 32(0.4) & 47.1(0.2) & 10.6(0.2) & 23.9(0.4) & 44.6(0.2) \\
  EQC/LASSO & 33.5(0.5) & 40.2(0.4) & 47.8(0.2) & 28.6(0.3) & 37.2(0.3) & 47(0.2) & 25.8(0.4) & 32.7(0.4) & 46(0.2) \\
  EQC/LSVM & 33.5(0.4) & 40.5(0.4) & 48.4(0.2) & 24.5(0.3) & 35.9(0.3) & 47.6(0.2) & 14.1(0.3) & 26.8(0.3) & 45.4(0.2) \\
  NB & 41.5(0.1) & 43.8(0.2) & 48.2(0.1) & 39.8(0.1) & 42.3(0.1) & 47.6(0.1) & 37.8(0.1) & 40.5(0.1) & 46.7(0.1) \\
  LDA & 35.8(0.2) & 41.3(0.2) & 48.2(0.2) & 44.9(0.3) & 47.1(0.2) & 49.2(0.1) & 31.4(0.3) & 38.3(0.2) & 47.4(0.1) \\
  RIDGE & 31(0.2) & 38.4(0.2) & 47.3(0.2) & 25.3(0.1) & 34.4(0.2) & 46.3(0.1) & 18.4(0.1) & 28.9(0.2) & 44.8(0.1) \\
  LASSO & 45.5(0.5) & 47.7(0.4) & 49(0.2) & 44.3(0.6) & 46.4(0.4) & 49(0.2) & 42.5(0.7) & 45.5(0.5) & 49.2(0.2) \\
  LSVM & 36.2(0.3) & 41.9(0.2) & 48.1(0.2) & 29.8(0.2) & 38.2(0.2) & 47.3(0.1) & 21.3(0.1) & 32(0.2) & 45.9(0.1) \\
  RSVM & 32.1(0.2) & 39.2(0.2) & 47.6(0.2) & 26.3(0.2) & 35.3(0.2) & 46.6(0.1) & 18.7(0.2) & 29.6(0.2) & 45.2(0.1) \\
  \hline
\multicolumn{10}{c}{$n=200$} \\
 \hline & \multicolumn{3}{c}{$p=50$} & \multicolumn{3}{c}{$p=100$} & \multicolumn{3}{c}{$p=200$}\\
 \hline
   & $0\%$ & $50\%$ & $90\%$ & $0\%$ & $50\%$ & $90\%$  & $0\%$ & $50\%$ & $90\%$\\
 QC & 23.5(0.1) & 32.6(0.2) & 45.9(0.2) & 15.1(0.1) & 25.8(0.2) & 43.7(0.2) & 7.4(0.1) & 17.8(0.1) & 41.1(0.2) \\
  MC & 22.8(0.1) & 31.6(0.1) & 44.5(0.1) & 14.5(0.1) & 25(0.1) & 42.5(0.1) & 6.8(0) & 17.1(0.1) & 39.5(0.1) \\
  EMC & 23.2(0.1) & 32.3(0.1) & 45(0.1) & 14.8(0.1) & 25.7(0.2) & 43.3(0.1) & 7.3(0.1) & 18.1(0.1) & 40.5(0.1) \\
  EQC/LOGISTIC & 26.8(0.3) & 35.1(0.2) & 46.9(0.2) & 20.6(0.2) & 31.9(0.3) & 46(0.2) & 13.3(0.1) & 24.7(0.2) & 43.9(0.2) \\
  EQC/RIDGE & 23.7(0.2) & 33.1(0.2) & 46.6(0.2) & 15.5(0.2) & 26.5(0.2) & 44.9(0.2) & 7.7(0.1) & 18.7(0.2) & 42.2(0.3) \\
  EQC/LASSO & 26.8(0.2) & 34.9(0.3) & 46.3(0.2) & 22.3(0.2) & 30.9(0.3) & 45.2(0.2) & 18.8(0.2) & 26.3(0.2) & 43.3(0.2) \\
  EQC/LSVM & 28.1(0.2) & 35.8(0.3) & 47.1(0.2) & 22.1(0.2) & 32.6(0.3) & 46.5(0.2) & 11.9(0.1) & 24.1(0.2) & 44.1(0.2) \\
  NB & 39.8(0.1) & 42.8(0.2) & 47.5(0.1) & 37.9(0.1) & 40.5(0.1) & 46.7(0.1) & 35.3(0.1) & 38.7(0.1) & 45.6(0.1) \\
  LDA & 30.6(0.1) & 37.8(0.2) & 46.9(0.1) & 28.5(0.2) & 36.3(0.2) & 46.3(0.1) & 43.8(0.3) & 46.7(0.3) & 49(0.1) \\
  RIDGE & 28.8(0.1) & 36.3(0.2) & 46.4(0.1) & 22.7(0.1) & 31.7(0.1) & 44.9(0.1) & 15.7(0.1) & 26(0.1) & 43(0.1) \\
  LASSO & 44.9(0.6) & 46.8(0.4) & 48.7(0.2) & 41.1(0.9) & 45.8(0.5) & 48.3(0.3) & 32.9(1) & 44.1(0.5) & 47.7(0.3) \\
  LSVM & 31.6(0.1) & 38.3(0.2) & 47(0.1) & 28.1(0.2) & 37.1(0.2) & 46.6(0.1) & 19.6(0.1) & 30.5(0.1) & 45(0.1) \\
  RSVM & 29.4(0.1) & 37(0.2) & 46.7(0.1) & 22.8(0.1) & 32.2(0.1) & 45.2(0.1) & 15.2(0.1) & 26.7(0.1) & 43.7(0.1) \\
   \hline
\end{tabular}
\end{table}

\begin{table}[H]
\centering
\caption{ Simulation study:
the mean test classification error rates,
and their standard errors in parentheses,
of each method for the  \textbf{dependent T3} scenario.
All numbers are in percentages and rounded to one digit.
The third line indicates the percentage of irrelevant variables within $p$ variables.}
\label{tab:simTabt3dep}
\scriptsize
\begin{tabular}{llllllllll}
\hline
\multicolumn{10}{c}{$n=100$} \\
 \hline & \multicolumn{3}{c}{$p=50$} & \multicolumn{3}{c}{$p=100$} & \multicolumn{3}{c}{$p=200$}\\
 \hline
   & $0\%$ & $50\%$ & $90\%$ & $0\%$ & $50\%$ & $90\%$  & $0\%$ & $50\%$ & $90\%$\\
QC & 28.9(0.3) & 36.9(0.3) & 46.9(0.2) & 20.8(0.2) & 30.9(0.3) & 45.6(0.2) & 11.8(0.2) & 24(0.3) & 43.7(0.2) \\
  MC & 27.3(0.2) & 35.1(0.2) & 45.8(0.1) & 19.9(0.2) & 29.4(0.2) & 44.2(0.1) & 11(0.1) & 22.5(0.1) & 42.3(0.1) \\
  EMC & 25.1(0.3) & 34.6(0.3) & 46(0.2) & 18.2(0.2) & 29.1(0.2) & 44.6(0.1) & 10.2(0.1) & 22.6(0.2) & 42.9(0.1) \\
  EQC/LOGISTIC & 28.3(0.4) & 38.1(0.4) & 47.4(0.2) & 21.5(0.3) & 32.4(0.3) & 47.1(0.2) & - & - & - \\
  EQC/RIDGE & 26.5(0.4) & 36(0.4) & 47.3(0.2) & 19.4(0.3) & 31(0.3) & 46.5(0.2) & 10.8(0.2) & 24.2(0.3) & 44.6(0.2) \\
  EQC/LASSO & 29.2(0.5) & 37.5(0.5) & 47.3(0.2) & 26.2(0.3) & 33.6(0.4) & 46.9(0.2) & 24.9(0.3) & 32.3(0.4) & 45.9(0.3) \\
  EQC/LSVM & 28.5(0.3) & 37.6(0.4) & 47.7(0.2) & 20.8(0.3) & 32.9(0.3) & 46.9(0.2) & 12(0.2) & 25.4(0.3) & 45.4(0.2) \\
  NB & 41.4(0.2) & 43.9(0.2) & 48(0.1) & 39.6(0.1) & 42.5(0.1) & 47.6(0.1) & 37.6(0.1) & 40.4(0.1) & 46.7(0.1) \\
  LDA & 21.7(0.5) & 29.5(0.6) & 44.1(0.5) & 38.6(0.5) & 42.4(0.4) & 48.3(0.2) & 31.8(0.3) & 40.8(0.2) & 47.9(0.1) \\
  RIDGE & 28.4(0.4) & 38(0.3) & 47.2(0.1) & 23.3(0.3) & 34.3(0.2) & 46.2(0.1) & 17(0.2) & 29.1(0.2) & 44.7(0.1) \\
  LASSO & 43.7(0.9) & 46.4(0.7) & 49.3(0.3) & 43.6(0.7) & 46.9(0.4) & 49.4(0.2) & 44.3(0.7) & 46.1(0.5) & 49.5(0.1) \\
  LSVM & 23.2(0.5) & 32.6(0.5) & 45.6(0.4) & 21.5(0.3) & 33.2(0.3) & 46.2(0.2) & 17.2(0.2) & 29.3(0.2) & 45.1(0.1) \\
  RSVM & 26.1(0.4) & 36(0.4) & 46.9(0.1) & 22.4(0.3) & 33.4(0.2) & 46.2(0.1) & 17(0.2) & 29(0.2) & 44.9(0.1) \\
  \hline
\multicolumn{10}{c}{$n=200$} \\
 \hline & \multicolumn{3}{c}{$p=50$} & \multicolumn{3}{c}{$p=100$} & \multicolumn{3}{c}{$p=200$}\\
 \hline
   & $0\%$ & $50\%$ & $90\%$ & $0\%$ & $50\%$ & $90\%$  & $0\%$ & $50\%$ & $90\%$\\
  QC & 24.6(0.2) & 33.6(0.2) & 45.9(0.2) & 16.7(0.2) & 26.8(0.2) & 43.8(0.2) & 8.4(0.1) & 18.9(0.2) & 41(0.2) \\
  MC & 23.9(0.2) & 32.5(0.2) & 44.6(0.1) & 15.8(0.1) & 25.9(0.1) & 42.6(0.1) & 7.8(0.1) & 18.2(0.1) & 39.7(0.1) \\
  EMC & 20.5(0.2) & 30.2(0.3) & 44(0.2) & 13.4(0.1) & 24.8(0.2) & 42.9(0.2) & 6.6(0.1) & 17.4(0.2) & 40.3(0.1) \\
  EQC/LOGISTIC & 23.6(0.5) & 31.3(0.3) & 45.5(0.3) & 15.8(0.2) & 27.9(0.3) & 45(0.3) & 9.8(0.2) & 20.4(0.3) & 43.2(0.2) \\
  EQC/RIDGE & 21.5(0.3) & 31.1(0.3) & 45.6(0.3) & 14.2(0.2) & 26(0.3) & 44(0.2) & 7.1(0.1) & 18(0.2) & 41.9(0.3) \\
  EQC/LASSO & 22.4(0.3) & 31.1(0.3) & 45.4(0.3) & 18(0.2) & 26.4(0.3) & 43.3(0.3) & 16(0.2) & 22.5(0.2) & 41.6(0.4) \\
  EQC/LSVM & 23.4(0.3) & 32(0.3) & 45.7(0.3) & 16.7(0.2) & 28.8(0.3) & 44.9(0.2) & 8.6(0.1) & 20.4(0.2) & 43.1(0.2) \\
  NB & 39.8(0.2) & 42.4(0.2) & 47.6(0.1) & 37.3(0.1) & 40.5(0.1) & 46.9(0.1) & 35.1(0.1) & 38.4(0.1) & 45.7(0.1) \\
  LDA & 16.1(0.4) & 24(0.5) & 42.1(0.5) & 14.1(0.3) & 22.9(0.4) & 41(0.5) & 39.4(0.4) & 41.9(0.3) & 47.9(0.2) \\
  RIDGE & 19.9(0.5) & 31.8(0.5) & 46.3(0.1) & 15.5(0.3) & 28.1(0.3) & 45.1(0.1) & 10.8(0.1) & 23.1(0.2) & 43.2(0.1) \\
  LASSO & 21.9(1.1) & 30.8(1.2) & 48.9(0.4) & 22.5(1.1) & 33(1.3) & 48(0.5) & 23.9(1) & 39.1(1.1) & 47.9(0.3) \\
  LSVM & 18(0.4) & 24.9(0.4) & 40.9(0.6) & 15.5(0.2) & 26.3(0.4) & 43.1(0.3) & 11.4(0.1) & 23.6(0.2) & 43.6(0.2) \\
  RSVM & 19.2(0.4) & 29.3(0.4) & 44.9(0.3) & 15.3(0.3) & 26.9(0.3) & 44.6(0.2) & 11(0.2) & 22.8(0.2) & 43.1(0.1) \\
   \hline
\end{tabular}
\end{table}

\begin{table}[H]
\centering
\caption{ Simulation study:
the mean test classification error rates,
and their standard errors in parentheses,
of each method for the  \textbf{independent LOGNORMAL} scenario.
All numbers are in percentages and rounded to one digit.
The third line indicates the percentage of irrelevant variables within $p$ variables.}
\label{tab:simTabLOGNORMAL}
\scriptsize
\begin{tabular}{llllllllll}
\hline
\multicolumn{10}{c}{$n=100$} \\
 \hline & \multicolumn{3}{c}{$p=50$} & \multicolumn{3}{c}{$p=100$} & \multicolumn{3}{c}{$p=200$}\\
 \hline
& $0\%$ & $50\%$ & $90\%$ & $0\%$ & $50\%$ & $90\%$  & $0\%$ & $50\%$ & $90\%$\\
QC & 23.3(0.5) & 45.7(0.3) & 49.5(0.1) & 17.9(0.5) & 44.6(0.2) & 49.4(0.1) & 12.5(0.3) & 42.4(0.3) & 49(0.1) \\
  MC & 43(0.2) & 47.6(0.1) & 49.5(0.1) & 40(0.2) & 46.7(0.1) & 49.4(0.1) & 35.8(0.2) & 45.2(0.1) & 49.2(0.1) \\
  EMC & 43.4(0.2) & 46.3(0.2) & 49(0.1) & 40.1(0.2) & 44.9(0.2) & 49(0.1) & 35.8(0.2) & 42.4(0.1) & 48.4(0.1) \\
  EQC/LOGISTIC & 25.5(0.7) & 39.4(0.7) & 48.6(0.2) & 15.2(0.3) & 28.2(0.6) & 46.9(0.3) & - & - & - \\
  EQC/RIDGE & 15.3(0.3) & 28.2(0.4) & 47(0.3) & 8.1(0.2) & 20.8(0.3) & 45.8(0.3) & 2.8(0.1) & 12.3(0.3) & 41.9(0.4) \\
  EQC/LASSO & 24(0.4) & 33.4(0.6) & 47.3(0.3) & 20.8(0.4) & 28.2(0.6) & 45.7(0.4) & 20.3(0.3) & 24(0.4) & 42.2(0.5) \\
  EQC/LSVM & 22.5(0.4) & 34(0.5) & 47.4(0.2) & 12.1(0.3) & 26.2(0.4) & 46.7(0.3) & 4.2(0.1) & 15(0.3) & 43.3(0.4) \\
  NB & 49.3(0.1) & 49.4(0.1) & 49.7(0.1) & 49.4(0.1) & 49.4(0) & 49.7(0.1) & 49.3(0.1) & 49.3(0.1) & 49.6(0.1) \\
  LDA & 47.8(0.1) & 48.7(0.1) & 49.7(0.1) & 49.3(0.1) & 49.6(0.1) & 50(0.1) & 47(0.2) & 48.3(0.1) & 49.6(0.1) \\
  RIDGE & 46.8(0.1) & 48.2(0.1) & 49.6(0.1) & 45.6(0.1) & 47.6(0.1) & 49.5(0.1) & 43.9(0.1) & 46.6(0.1) & 49.4(0.1) \\
  LASSO & 49.7(0.1) & 49.7(0.1) & 50(0) & 49.5(0.1) & 49.9(0.1) & 49.9(0) & 49.4(0.1) & 49.8(0.1) & 50(0) \\
  LSVM & 47.9(0.1) & 48.8(0.1) & 49.7(0.1) & 47.2(0.1) & 48.4(0.1) & 49.6(0.1) & 45.1(0.1) & 47.2(0.1) & 49.5(0.1) \\
  RSVM & 46.3(0.1) & 48.2(0.1) & 49.6(0.1) & 45.8(0.1) & 47.9(0.1) & 49.6(0.1) & 44.3(0.1) & 46.9(0.1) & 49.5(0.1) \\
  \hline
\multicolumn{10}{c}{$n=200$} \\
 \hline & \multicolumn{3}{c}{$p=50$} & \multicolumn{3}{c}{$p=100$} & \multicolumn{3}{c}{$p=200$}\\
 \hline
 & $0\%$ & $50\%$ & $90\%$ & $0\%$ & $50\%$ & $90\%$  & $0\%$ & $50\%$ & $90\%$\\
QC & 13.9(0.1) & 41.7(0.4) & 49.3(0.1) & 7.2(0.1) & 41.3(0.3) & 49.1(0.1) & 2.7(0.1) & 38.2(0.2) & 48.7(0.1) \\
  MC & 41(0.1) & 46.8(0.1) & 49.5(0.1) & 37.4(0.1) & 45.5(0.1) & 49.2(0.1) & 32.5(0.1) & 43.7(0.1) & 48.9(0.1) \\
  EMC & 41(0.1) & 45.1(0.1) & 48.9(0.1) & 37.9(0.2) & 43.5(0.2) & 48.5(0.1) & 32.8(0.2) & 40.6(0.1) & 47.9(0.1) \\
  EQC/LOGISTIC & 20.2(0.9) & 28.2(0.4) & 44.9(0.3) & 10.2(0.2) & 24.8(0.6) & 46.2(0.3) & 5.9(0.1) & 13.4(0.2) & 41.1(0.4) \\
  EQC/RIDGE & 12.3(0.1) & 23.3(0.2) & 44.4(0.3) & 5.8(0.1) & 14.9(0.2) & 41.6(0.3) & 1.9(0) & 7.8(0.1) & 37(0.3) \\
  EQC/LASSO & 16.9(0.2) & 24.6(0.2) & 44.5(0.3) & 12.8(0.2) & 19(0.2) & 42.1(0.4) & 11.7(0.2) & 14.7(0.2) & 35.8(0.4) \\
  EQC/LSVM & 17.1(0.2) & 27.5(0.3) & 45.4(0.3) & 9.3(0.2) & 22.2(0.3) & 44.4(0.3) & 3.1(0.1) & 11.1(0.2) & 39.7(0.3) \\
  NB & 49.2(0.1) & 49.4(0.1) & 49.6(0.1) & 49.1(0) & 49.2(0.1) & 49.7(0.1) & 49.1(0) & 49.3(0.1) & 49.6(0) \\
  LDA & 46.4(0.1) & 48(0.1) & 49.5(0.1) & 45.9(0.1) & 48(0.1) & 49.6(0.1) & 49.1(0.1) & 49.6(0.1) & 49.9(0.1) \\
  RIDGE & 45.8(0.1) & 47.8(0.1) & 49.5(0.1) & 44.3(0.1) & 47.1(0.1) & 49.5(0.1) & 42.6(0.1) & 45.9(0.1) & 49.2(0.1) \\
  LASSO & 49.7(0.1) & 49.8(0) & 50(0) & 49.7(0.1) & 49.8(0.1) & 50(0) & 49.4(0.1) & 49.8(0.1) & 50(0) \\
  LSVM & 46.6(0.1) & 48(0.1) & 49.6(0.1) & 46(0.1) & 48.2(0.1) & 49.6(0.1) & 44.8(0.1) & 47.2(0.1) & 49.3(0.1) \\
  RSVM & 44.8(0.1) & 47.5(0.1) & 49.5(0.1) & 43.8(0.1) & 47(0.1) & 49.4(0.1) & 42.3(0.1) & 46.1(0.1) & 49.3(0.1) \\
   \hline
\end{tabular}
\end{table}

\begin{table}[H]
\centering
\caption{ Simulation study:
the mean test classification error rates,
and their standard errors in parentheses,
of each method for the  \textbf{dependent LOGNORMAL} scenario.
All numbers are in percentages and rounded to one digit.
The third line indicates the percentage of irrelevant variables within $p$ variables.}
\label{tab:simTablogNormaldep}
\scriptsize
\begin{tabular}{llllllllll}
\hline
\multicolumn{10}{c}{$n=100$} \\
 \hline & \multicolumn{3}{c}{$p=50$} & \multicolumn{3}{c}{$p=100$} & \multicolumn{3}{c}{$p=200$}\\
 \hline
& $0\%$ & $50\%$ & $90\%$ & $0\%$ & $50\%$ & $90\%$  & $0\%$ & $50\%$ & $90\%$\\
QC & 24.4(0.5) & 46.1(0.3) & 49.6(0.1) & 18.9(0.5) & 45(0.3) & 49.4(0.1) & 14(0.4) & 42.4(0.3) & 49.1(0.1) \\
  MC & 43.8(0.2) & 47.7(0.1) & 49.6(0.1) & 41.6(0.2) & 46.7(0.1) & 49.5(0.1) & 37.9(0.1) & 45.6(0.1) & 49.2(0.1) \\
  EMC & 42.7(0.2) & 46.1(0.1) & 49.3(0.1) & 40.3(0.2) & 44.7(0.1) & 48.9(0.1) & 36.8(0.1) & 42.9(0.1) & 48.3(0.1) \\
  EQC/LOGISTIC & 28.6(0.8) & 41.6(0.6) & 49(0.2) & 17.6(0.4) & 29.5(0.4) & 47.6(0.2) & - & - & - \\
  EQC/RIDGE & 17.6(0.2) & 30.9(0.5) & 47.3(0.2) & 10.2(0.2) & 22.3(0.3) & 45.9(0.3) & 4(0.1) & 14.3(0.2) & 42.7(0.4) \\
  EQC/LASSO & 25(0.4) & 35.4(0.6) & 47.8(0.3) & 21.8(0.4) & 29.6(0.5) & 46.5(0.3) & 21.1(0.3) & 24.9(0.4) & 42.9(0.5) \\
  EQC/LSVM & 24.7(0.4) & 35.7(0.5) & 48(0.2) & 15.1(0.3) & 28.2(0.4) & 47(0.2) & 5.9(0.1) & 17.4(0.2) & 44.3(0.3) \\
  NB & 49.3(0.1) & 49.4(0.1) & 49.7(0.1) & 49.3(0.1) & 49.5(0.1) & 49.8(0.1) & 49.3(0.1) & 49.5(0.1) & 49.7(0.1) \\
  LDA & 47.4(0.1) & 48.6(0.1) & 49.7(0.1) & 49.3(0.1) & 49.3(0.1) & 50.1(0.1) & 46.7(0.1) & 48.4(0.1) & 49.6(0.1) \\
  RIDGE & 46.9(0.1) & 48.4(0.1) & 49.6(0.1) & 46(0.1) & 47.9(0.1) & 49.6(0.1) & 44.7(0.1) & 47.1(0.1) & 49.4(0.1) \\
  LASSO & 49.7(0.1) & 49.9(0) & 50(0) & 49.6(0.1) & 49.8(0.1) & 50(0) & 49.5(0.1) & 49.7(0.1) & 50(0) \\
  LSVM & 47.3(0.2) & 48.6(0.1) & 49.8(0.1) & 46.7(0.1) & 48.4(0.1) & 49.8(0.1) & 45.2(0.1) & 47.5(0.1) & 49.5(0.1) \\
  RSVM & 45.7(0.2) & 48.1(0.1) & 49.6(0.1) & 45.7(0.1) & 47.9(0.1) & 49.7(0.1) & 44.5(0.1) & 47.2(0.1) & 49.4(0.1) \\
  \hline
\multicolumn{10}{c}{$n=200$} \\
 \hline & \multicolumn{3}{c}{$p=50$} & \multicolumn{3}{c}{$p=100$} & \multicolumn{3}{c}{$p=200$}\\
 \hline
 & $0\%$ & $50\%$ & $90\%$ & $0\%$ & $50\%$ & $90\%$  & $0\%$ & $50\%$ & $90\%$\\
   QC & 16.1(0.2) & 41.9(0.4) & 49.3(0.1) & 9.5(0.2) & 41.4(0.3) & 49.1(0.1) & 4.1(0.1) & 38(0.2) & 48.6(0.1) \\
  MC & 41.9(0.1) & 47(0.1) & 49.4(0) & 39(0.1) & 45.7(0.1) & 49.2(0.1) & 34.6(0.1) & 43.9(0.1) & 48.9(0.1) \\
  EMC & 39.9(0.2) & 44.3(0.2) & 48.7(0.1) & 37.3(0.2) & 43.1(0.1) & 48.4(0.1) & 33.2(0.1) & 40.5(0.1) & 47.8(0.1) \\
  EQC/LOGISTIC & 19.9(0.5) & 30.2(0.3) & 46(0.3) & 12.9(0.2) & 28.3(0.6) & 46.8(0.3) & 7.2(0.1) & 16.8(0.2) & 42.1(0.4) \\
  EQC/RIDGE & 14.8(0.1) & 25.7(0.2) & 45.5(0.3) & 7.8(0.1) & 17.6(0.2) & 42.1(0.3) & 3(0.1) & 10.1(0.1) & 37.5(0.3) \\
  EQC/LASSO & 18.7(0.2) & 28(0.3) & 45.5(0.3) & 14.4(0.2) & 21.6(0.2) & 43(0.4) & 12.8(0.2) & 16.8(0.2) & 38.9(0.5) \\
  EQC/LSVM & 19.4(0.2) & 30.6(0.3) & 45.8(0.3) & 12.5(0.3) & 25.5(0.3) & 45.1(0.3) & 4.5(0.1) & 14.7(0.2) & 41.7(0.4) \\
  NB & 49.2(0.1) & 49.3(0.1) & 49.7(0) & 49.1(0.1) & 49.3(0.1) & 49.7(0.1) & 49.1(0.1) & 49.1(0.1) & 49.5(0.1) \\
  LDA & 46(0.1) & 47.9(0.1) & 49.4(0.1) & 45.7(0.1) & 47.7(0.1) & 49.6(0.1) & 48.7(0.1) & 49.5(0.1) & 50(0.1) \\
  RIDGE & 46.2(0.1) & 47.9(0.1) & 49.6(0.1) & 45.1(0.1) & 47.2(0.1) & 49.4(0.1) & 43.2(0.1) & 46.2(0.1) & 49.1(0.1) \\
  LASSO & 49.8(0.1) & 49.8(0.1) & 50(0) & 49.7(0.1) & 49.8(0.1) & 50(0) & 49.6(0.1) & 49.7(0.1) & 49.9(0) \\
  LSVM & 46.1(0.1) & 47.8(0.1) & 49.4(0.1) & 45.9(0.1) & 47.8(0.1) & 49.6(0.1) & 44.6(0.1) & 47(0.1) & 49.3(0.1) \\
  RSVM & 42.8(0.2) & 46.8(0.1) & 49.4(0.1) & 43.3(0.1) & 46.8(0.1) & 49.5(0.1) & 41.5(0.1) & 46(0.1) & 49.1(0.1) \\
   \hline
\end{tabular}
\end{table}

\begin{table}[H]
\centering
\caption{ Simulation study:
the mean test classification error rates,
and their standard errors in parentheses,
of each method for the  \textbf{independent HETEROGENEOUS} scenario.
All numbers are in percentages and rounded to one digit.
The third line indicates the percentage of irrelevant variables within $p$ variables.}
\label{tab:simTabheterogeneous}
\scriptsize
\begin{tabular}{llllllllll}
\hline
\multicolumn{10}{c}{$n=100$} \\
 \hline & \multicolumn{3}{c}{$p=50$} & \multicolumn{3}{c}{$p=100$} & \multicolumn{3}{c}{$p=200$}\\
 \hline
 & $0\%$ & $50\%$ & $90\%$ & $0\%$ & $50\%$ & $90\%$  & $0\%$ & $50\%$ & $90\%$\\
QC & 23.7(0.4) & 38.8(0.4) & 48.2(0.1) & 18.3(0.3) & 34.7(0.3) & 47.8(0.1) & 10.3(0.2) & 29.1(0.3) & 46.8(0.1) \\
  MC & 37.2(0.2) & 43.4(0.2) & 48.7(0.1) & 31.7(0.2) & 40.8(0.2) & 48(0.1) & 25.4(0.2) & 37.1(0.1) & 47.4(0.1) \\
  EMC & 34.9(0.3) & 41.2(0.2) & 48.1(0.1) & 28.7(0.2) & 37.9(0.2) & 47.3(0.1) & 20.9(0.2) & 33(0.2) & 46(0.1) \\
  EQC/LOGISTIC & 9.3(0.3) & 22.4(0.7) & 43.4(0.4) & 6(0.2) & 13.5(0.3) & 38.7(0.4) & - & - & - \\
  EQC/RIDGE & 5(0.1) & 14(0.2) & 40.7(0.3) & 1.7(0) & 7.2(0.1) & 36.3(0.4) & 0.3(0) & 2.6(0.1) & 29.2(0.2) \\
  EQC/LASSO & 6.6(0.2) & 12.3(0.3) & 31.6(0.5) & 5.2(0.2) & 7.5(0.2) & 23.3(0.3) & 4.8(0.1) & 5.4(0.1) & 15.8(0.3) \\
  EQC/LSVM & 6(0.2) & 16.3(0.3) & 42.4(0.4) & 1.8(0.1) & 9.3(0.2) & 38.2(0.4) & 0.3(0) & 3.2(0.1) & 32(0.3) \\
  NB & 45.1(0.1) & 46.6(0.1) & 49(0.1) & 44.3(0.1) & 45.7(0.1) & 48.6(0.1) & 43.6(0.1) & 45(0.1) & 48.3(0.1) \\
  LDA & 41.4(0.2) & 45.1(0.2) & 48.9(0.1) & 46.7(0.3) & 48.1(0.2) & 49.6(0.1) & 38.1(0.2) & 43.5(0.2) & 48.4(0.1) \\
  RIDGE & 38.2(0.2) & 43.5(0.2) & 48.6(0.1) & 33.7(0.2) & 40.7(0.2) & 47.9(0.1) & 28.5(0.2) & 37.5(0.2) & 47.1(0.1) \\
  LASSO & 48.6(0.3) & 48.5(0.2) & 49.8(0.1) & 47.6(0.3) & 48.4(0.2) & 49.6(0.1) & 46.9(0.4) & 48.2(0.3) & 49.5(0.1) \\
  LSVM & 41.6(0.2) & 45.3(0.2) & 49(0.1) & 37.9(0.2) & 43.2(0.2) & 48.5(0.1) & 31.8(0.2) & 39.6(0.2) & 47.6(0.1) \\
  RSVM & 38.8(0.2) & 43.8(0.2) & 48.7(0.1) & 34.6(0.2) & 41.4(0.2) & 48(0.1) & 29.1(0.2) & 38.2(0.2) & 47.3(0.1) \\
  \hline
\multicolumn{10}{c}{$n=200$} \\
 \hline & \multicolumn{3}{c}{$p=50$} & \multicolumn{3}{c}{$p=100$} & \multicolumn{3}{c}{$p=200$}\\
 \hline
 & $0\%$ & $50\%$ & $90\%$ & $0\%$ & $50\%$ & $90\%$  & $0\%$ & $50\%$ & $90\%$\\
 QC & 17.6(0.3) & 33.1(0.4) & 47.8(0.1) & 11.8(0.2) & 29(0.2) & 47(0.2) & 4.9(0.1) & 21.9(0.1) & 45.1(0.2) \\
  MC & 34.5(0.1) & 41.4(0.2) & 48(0.1) & 28.5(0.1) & 38(0.1) & 47.5(0.1) & 21.2(0.1) & 33.1(0.1) & 46.4(0.1) \\
  EMC & 32.2(0.1) & 39.1(0.2) & 47.2(0.1) & 25.3(0.1) & 34.7(0.2) & 46.7(0.1) & 17.6(0.1) & 28.9(0.2) & 44.9(0.1) \\
  EQC/LOGISTIC & 7.1(0.8) & 19.6(1.1) & 36(0.2) & 2.9(0.1) & 9.1(0.2) & 34.8(0.4) & 2.3(0.1) & 5.6(0.1) & 27.8(0.2) \\
  EQC/RIDGE & 3.2(0.1) & 9.9(0.2) & 36.6(0.2) & 1.2(0) & 4.9(0.1) & 30.7(0.2) & 0.2(0) & 1.7(0) & 23.3(0.2) \\
  EQC/LASSO & 3.7(0.1) & 7.8(0.1) & 29.7(0.3) & 2.6(0.1) & 4.2(0.1) & 20.4(0.2) & 1.8(0) & 2.7(0.1) & 13.5(0.2) \\
  EQC/LSVM & 3.5(0.1) & 10.9(0.2) & 36.8(0.2) & 1.1(0) & 6.4(0.2) & 33.4(0.2) & 0.2(0) & 1.9(0) & 27.1(0.2) \\
  NB & 43.7(0.1) & 45.6(0.1) & 48.6(0.1) & 42.6(0.1) & 44.2(0.1) & 48.2(0.1) & 41.3(0.1) & 43.1(0.1) & 47.5(0.1) \\
  LDA & 37.5(0.2) & 42.7(0.2) & 48.1(0.1) & 35.8(0.2) & 41.5(0.2) & 48(0.1) & 46.7(0.2) & 47.9(0.2) & 49.4(0.1) \\
  RIDGE & 35.8(0.1) & 41.7(0.2) & 47.9(0.1) & 30.8(0.1) & 38.2(0.1) & 47.2(0.1) & 24.7(0.1) & 33.9(0.1) & 46.1(0.1) \\
  LASSO & 48.5(0.2) & 48.9(0.2) & 49.6(0.1) & 47.5(0.3) & 48.4(0.2) & 49.6(0.1) & 45.5(0.5) & 47.5(0.3) & 49.2(0.1) \\
  LSVM & 38(0.2) & 42.9(0.2) & 48.2(0.1) & 36.3(0.2) & 42.1(0.2) & 48.2(0.1) & 29.4(0.2) & 37.8(0.1) & 47.2(0.1) \\
  RSVM & 36(0.1) & 41.8(0.2) & 48(0.1) & 31.4(0.1) & 38.8(0.1) & 47.5(0.1) & 25.1(0.2) & 34.9(0.1) & 46.5(0.1) \\
   \hline
\end{tabular}
\end{table}

\begin{table}[H]
\centering
\caption{ Simulation study:
the mean test classification error rates,
and their standard errors in parentheses,
of each method for the  \textbf{dependent HETEROGENEOUS} scenario.
All numbers are in percentages and rounded to one digit.
The third line indicates the percentage of irrelevant variables within $p$ variables.}
\label{tab:simTabheterogeneousdep}
\scriptsize
\begin{tabular}{llllllllll}
\hline
\multicolumn{10}{c}{$n=100$} \\
 \hline & \multicolumn{3}{c}{$p=50$} & \multicolumn{3}{c}{$p=100$} & \multicolumn{3}{c}{$p=200$}\\
 \hline
 & $0\%$ & $50\%$ & $90\%$ & $0\%$ & $50\%$ & $90\%$  & $0\%$ & $50\%$ & $90\%$\\
QC & 23.9(0.4) & 38.5(0.4) & 48.2(0.1) & 18.2(0.2) & 34.5(0.2) & 47.9(0.1) & 10.5(0.2) & 28.8(0.2) & 47.1(0.2) \\
  MC & 37.7(0.2) & 43.4(0.2) & 48.6(0.1) & 32.7(0.2) & 41.1(0.2) & 48.2(0.1) & 26.3(0.2) & 37.3(0.2) & 47.3(0.1) \\
  EMC & 35.4(0.2) & 41.4(0.2) & 48(0.1) & 29.1(0.2) & 38.2(0.2) & 47.4(0.1) & 21.9(0.2) & 33.1(0.2) & 46.1(0.1) \\
  EQC/LOGISTIC & 8.5(0.3) & 21.9(0.8) & 44.3(0.5) & 6.2(0.2) & 12.6(0.3) & 39.6(0.4) & - & - & - \\
  EQC/RIDGE & 5.1(0.1) & 13.5(0.2) & 40.8(0.4) & 1.8(0) & 7(0.1) & 36.2(0.3) & 0.3(0) & 2.7(0.1) & 29.7(0.3) \\
  EQC/LASSO & 6.7(0.2) & 11.8(0.3) & 32.7(0.7) & 5.4(0.1) & 7.3(0.2) & 22.6(0.4) & 5(0.1) & 5.7(0.1) & 16(0.2) \\
  EQC/LSVM & 5.9(0.2) & 15.8(0.5) & 41.4(0.4) & 1.9(0.1) & 9.4(0.2) & 38.5(0.4) & 0.3(0) & 3.3(0.1) & 31.8(0.3) \\
  NB & 45.1(0.1) & 46.5(0.1) & 48.9(0.1) & 44.4(0.1) & 45.8(0.1) & 48.7(0.1) & 43.7(0.1) & 45.1(0.1) & 48.1(0.1) \\
  LDA & 42.4(0.3) & 45.3(0.2) & 49(0.1) & 47.1(0.3) & 48.4(0.2) & 49.6(0.1) & 38.7(0.3) & 43.5(0.2) & 48.4(0.1) \\
  RIDGE & 39.2(0.2) & 43.6(0.2) & 48.5(0.1) & 35(0.2) & 41.2(0.2) & 48(0.1) & 29.9(0.2) & 38.1(0.2) & 47.1(0.1) \\
  LASSO & 48.1(0.3) & 49.2(0.2) & 49.9(0.1) & 48.3(0.3) & 48.9(0.2) & 49.8(0.1) & 47.3(0.4) & 48.4(0.3) & 49.6(0.1) \\
  LSVM & 42.7(0.3) & 45.5(0.2) & 49.1(0.1) & 39.3(0.2) & 43.9(0.2) & 48.7(0.1) & 33.1(0.2) & 40.7(0.2) & 47.8(0.1) \\
  RSVM & 39.5(0.2) & 43.9(0.2) & 48.8(0.1) & 35.8(0.2) & 41.8(0.2) & 48.3(0.1) & 30.4(0.2) & 38.9(0.2) & 47.5(0.1) \\
  \hline
\multicolumn{10}{c}{$n=200$} \\
 \hline & \multicolumn{3}{c}{$p=50$} & \multicolumn{3}{c}{$p=100$} & \multicolumn{3}{c}{$p=200$}\\
 \hline
 & $0\%$ & $50\%$ & $90\%$ & $0\%$ & $50\%$ & $90\%$  & $0\%$ & $50\%$ & $90\%$\\
 QC & 17.8(0.3) & 32.6(0.4) & 47.4(0.2) & 12.1(0.2) & 28.8(0.2) & 46.7(0.2) & 5.3(0.1) & 22.3(0.2) & 45.1(0.2) \\
  MC & 34.9(0.2) & 41.9(0.1) & 48.2(0.1) & 29.4(0.2) & 38.4(0.1) & 47.3(0.1) & 22.2(0.1) & 34(0.1) & 46.5(0.1) \\
  EMC & 32.3(0.2) & 39.5(0.2) & 47.4(0.1) & 26(0.1) & 35.2(0.2) & 46.3(0.1) & 18.4(0.1) & 30.1(0.2) & 44.9(0.1) \\
  EQC/LOGISTIC & 6.8(0.6) & 20.6(1.2) & 35.5(0.2) & 3(0.1) & 9.3(0.5) & 34.2(0.4) & 2.4(0.1) & 5.3(0.1) & 27.9(0.4) \\
  EQC/RIDGE & 3.2(0.1) & 10.1(0.2) & 35.7(0.1) & 1.2(0) & 4.9(0.1) & 30.3(0.2) & 0.2(0) & 1.6(0) & 23.3(0.2) \\
  EQC/LASSO & 3.8(0.1) & 8(0.2) & 28.9(0.2) & 2.7(0.1) & 4.2(0.1) & 20(0.2) & 1.9(0.1) & 2.6(0.1) & 13.2(0.2) \\
  EQC/LSVM & 3.7(0.1) & 11(0.3) & 36.1(0.2) & 1.2(0.1) & 6.4(0.2) & 33.2(0.3) & 0.2(0) & 1.8(0) & 26.8(0.2) \\
  NB & 43.8(0.1) & 45.6(0.1) & 48.7(0.1) & 42.7(0.1) & 44.4(0.1) & 48.2(0.1) & 41.3(0.1) & 43.3(0.1) & 47.6(0.1) \\
  LDA & 38.2(0.2) & 43.5(0.1) & 48.4(0.1) & 36.9(0.2) & 42.2(0.2) & 48.1(0.1) & 46.7(0.2) & 47.7(0.2) & 49.4(0.1) \\
  RIDGE & 36.8(0.1) & 42.3(0.1) & 48.1(0.1) & 32.4(0.1) & 39.2(0.1) & 47.1(0.1) & 26.6(0.1) & 35.2(0.1) & 46.2(0.1) \\
  LASSO & 48.8(0.2) & 49(0.2) & 49.9(0.1) & 47.9(0.3) & 48.6(0.2) & 49.3(0.1) & 46.8(0.4) & 48.1(0.2) & 49.4(0.1) \\
  LSVM & 38.7(0.2) & 43.6(0.2) & 48.5(0.1) & 37.2(0.2) & 42.6(0.2) & 48.2(0.1) & 31.3(0.2) & 39.3(0.1) & 47.4(0.1) \\
  RSVM & 36.1(0.1) & 42.3(0.1) & 48.2(0.1) & 32.4(0.2) & 39.5(0.2) & 47.5(0.1) & 26.6(0.1) & 35.9(0.1) & 46.7(0.1) \\
   \hline
\end{tabular}
\end{table}

\newpage

\bibliographystyle{apa}
\bibliography{main}

\begin{thebibliography}{}

\bibitem[\protect\astroncite{Bickel and Levina}{2004}]{Bickel2004}
Bickel, P.~J. and Levina, E. (2004).
\newblock Some theory for {F}isher's linear discriminant function, `naive
  {B}ayes', and some alternatives when there are many more variables than
  observations.
\newblock {\em Bernoulli}, 10(6):989--1010.

\bibitem[\protect\astroncite{Breiman}{1996}]{breiman1996stacked}
Breiman, L. (1996).
\newblock Stacked regressions.
\newblock {\em Machine learning}, 24(1):49--64.

\bibitem[\protect\astroncite{Breiman}{2001}]{breiman2001random}
Breiman, L. (2001).
\newblock Random forests.
\newblock {\em Machine learning}, 45(1):5--32.

\bibitem[\protect\astroncite{Cardoso-Cachopo}{2007}]{phd-Ana-Cardoso-Cachopo}
Cardoso-Cachopo, A. (2007).
\newblock {Improving Methods for Single-label Text Categorization}.
\newblock PdD Thesis, Instituto Superior Tecnico, Universidade Tecnica de
  Lisboa.

\bibitem[\protect\astroncite{Cleveland}{1993}]{cleveland1993visualizing}
Cleveland, W.~S. (1993).
\newblock {\em Visualizing data}.
\newblock Hobart Press.

\bibitem[\protect\astroncite{Cortes and Vapnik}{1995}]{cortes1995support}
Cortes, C. and Vapnik, V. (1995).
\newblock Support-vector networks.
\newblock {\em Machine learning}, 20(3):273--297.

\bibitem[\protect\astroncite{Dietterich}{2000}]{dietterich2000ensemble}
Dietterich, T.~G. (2000).
\newblock Ensemble methods in machine learning.
\newblock In {\em International workshop on multiple classifier systems}, pages
  1--15. Springer.

\bibitem[\protect\astroncite{Dudoit et~al.}{2002}]{Dudoit2002}
Dudoit, S., Fridlyand, J., and Speed, T.~P. (2002).
\newblock Comparison of discrimination methods for the classification of tumors
  using gene expression data.
\newblock {\em Journal of the American Statistical Association},
  97(457):77--87.

\bibitem[\protect\astroncite{Fan and Fan}{2008}]{Fan2008}
Fan, J. and Fan, Y. (2008).
\newblock High dimensional classification using features annealed independence
  rules.
\newblock {\em Annals of statistics}, 36(6).

\bibitem[\protect\astroncite{Feinerer and Hornik}{2017}]{tm}
Feinerer, I. and Hornik, K. (2017).
\newblock tm: Text mining package.
\newblock \url{https://CRAN.R-project.org/package=tm}.
\newblock R package version 0.7-3.

\bibitem[\protect\astroncite{Freund and Schapire}{1997}]{freund1997decision}
Freund, Y. and Schapire, R.~E. (1997).
\newblock A decision-theoretic generalization of on-line learning and an
  application to boosting.
\newblock {\em Journal of computer and system sciences}, 55(1):119--139.

\bibitem[\protect\astroncite{Friedman et~al.}{2010}]{Friedman2010}
Friedman, J., Hastie, T., and Tibshirani, R. (2010).
\newblock Regularization paths for generalized linear models via coordinate
  descent.
\newblock {\em Journal of Statistical Software}, 33(1):1--22.

\bibitem[\protect\astroncite{Hall et~al.}{2009}]{HALL2009}
Hall, P., Titterington, D.~M., and Xue, J.-H. (2009).
\newblock Median-based classifiers for high-dimensional data.
\newblock {\em Journal of the American Statistical Association},
  104(488):1597--1608.

\bibitem[\protect\astroncite{Hastie et~al.}{2009}]{tibs2009}
Hastie, T., Tibshirani, R., and Friedman, J. (2009).
\newblock {\em The Elements of Statistical Learning}.
\newblock Springer Series in Statistics. Springer-Verlag, New York., 2 edition.

\bibitem[\protect\astroncite{Hennig and Viroli}{2016a}]{Hennig2016}
Hennig, C. and Viroli, C. (2016a).
\newblock Quantile-based classifiers.
\newblock {\em Biometrika}, 103(2):435--446.

\bibitem[\protect\astroncite{Hennig and Viroli}{2016b}]{quantileDA}
Hennig, C. and Viroli, C. (2016b).
\newblock quantileda: Quantile classifier.
\newblock \url{https://CRAN.R-project.org/package=quantileDA}.
\newblock R package version 1.1.

\bibitem[\protect\astroncite{James et~al.}{2013}]{James2013}
James, G., Witten, D., Hastie, T., and Tibshirani, R. (2013).
\newblock {\em An Introduction to Statistical Learning}.
\newblock Springer Series in Statistics. Springer-Verlag, New York.

\bibitem[\protect\astroncite{Joe}{2006}]{JOE2006}
Joe, H. (2006).
\newblock Generating random correlation matrices based on partial correlations.
\newblock {\em Journal of Multivariate Analysis}, 97(10):2177 -- 2189.

\bibitem[\protect\astroncite{Koenker}{2005}]{Koenker2005}
Koenker, R. (2005).
\newblock {\em Quantile Regression}.
\newblock Econometric Society Monographs. Cambridge University Press.

\bibitem[\protect\astroncite{Koenker and Bassett}{1978}]{Koenker1978}
Koenker, R. and Bassett, G. (1978).
\newblock Regression quantiles.
\newblock {\em Econometrica}, 46(1):33--50.

\bibitem[\protect\astroncite{Kuhn and Johnson}{2013}]{kuhn2013}
Kuhn, M. and Johnson, K. (2013).
\newblock {\em Applied Predictive Modeling}.
\newblock Springer.

\bibitem[\protect\astroncite{Lai and McLeod}{2018}]{eqcGithub}
Lai, Y. and McLeod, A.~I. (2018).
\newblock eqc: Ensemble quantile classifier.
\newblock \url{https://github.com/CliffordLai/eqc}.
\newblock R package version 1.0-5.

\bibitem[\protect\astroncite{Lewis}{1997}]{Lewis1997}
Lewis, D. (1997).
\newblock Reuters-21578 text categorization collection distribution 1.0.

\bibitem[\protect\astroncite{Lior}{2019}]{lior}
Lior, R. (2019).
\newblock {\em Ensemble Learning: Pattern Classification Using Ensemble
  Methods}.
\newblock World Scientific Publishing Company, 2 edition.

\bibitem[\protect\astroncite{Mason}{1982}]{Mason1982}
Mason, D.~M. (1982).
\newblock Some characterizations of almost sure bounds for weighted
  multidimensional empirical distributions and a {G}livenko-{C}antelli theorem
  for sample quantiles.
\newblock {\em Zeitschrift f{\"u}r Wahrscheinlichkeitstheorie und Verwandte
  Gebiete}, 59(4):505--513.

\bibitem[\protect\astroncite{Meyer et~al.}{2018}]{e1071}
Meyer, D., Dimitriadou, E., Hornik, K., Weingessel, A., and Leisch, F. (2018).
\newblock e1071: Misc functions of the department of statistics, probability
  theory group (formerly: E1071), tu wien.
\newblock \url{https://CRAN.R-project.org/package=e1071}.
\newblock R package version 1.7-0.

\bibitem[\protect\astroncite{Newbold and Granger}{1974}]{Newbold}
Newbold, P. and Granger, C. W.~T. (1974).
\newblock Experience with forecasting univariate time series and the
  combination of forecasts.
\newblock {\em Journal of the Royal Statistical Society A}, 137(2):131--165.

\bibitem[\protect\astroncite{Park and Hastie}{2007}]{park2007penalized}
Park, M.~Y. and Hastie, T. (2007).
\newblock Penalized logistic regression for detecting gene interactions.
\newblock {\em Biostatistics}, 9(1):30--50.

\bibitem[\protect\astroncite{Qiu and Joe.}{2015}]{clusterGeneration}
Qiu, W. and Joe., H. (2015).
\newblock clustergeneration: Random cluster generation (with specified degree
  of separation).
\newblock R package version 1.3.4.

\bibitem[\protect\astroncite{Schapire and Freund}{2012}]{schapire2012}
Schapire, R. and Freund, Y. (2012).
\newblock {\em Boosting: Foundations and Algorithms}.
\newblock MIT Press.

\bibitem[\protect\astroncite{Sebastiani}{2002}]{Sebastiani2002}
Sebastiani, F. (2002).
\newblock Machine learning in automated text categorization.
\newblock {\em ACM Comput. Surv.}, 34(1):1--47.

\bibitem[\protect\astroncite{Silver}{2012}]{silver2012}
Silver, N. (2012).
\newblock {\em The Signal and the Noise}.
\newblock Penguin Publishing Group.

\bibitem[\protect\astroncite{Tibshirani et~al.}{2003}]{TIBSHIRANI2003}
Tibshirani, R., Hastie, T., Narasimhan, B., and Chu, G. (2003).
\newblock Class prediction by nearest shrunken centroids, with applications to
  {DNA} microarrays.
\newblock {\em Statistical Science}, 18(1):104--117.

\bibitem[\protect\astroncite{Ting and Witten}{1999}]{ting1999issues}
Ting, K.~M. and Witten, I.~H. (1999).
\newblock Issues in stacked generalization.
\newblock {\em Journal of artificial intelligence research}, 10:271--289.

\bibitem[\protect\astroncite{Venables and Ripley}{2002}]{MASS}
Venables, W.~N. and Ripley, B.~D. (2002).
\newblock {\em Modern Applied Statistics with S}.
\newblock Springer, New York, fourth edition.
\newblock ISBN 0-387-95457-0.

\bibitem[\protect\astroncite{Wolpert}{1992}]{wolpert1992stacked}
Wolpert, D.~H. (1992).
\newblock Stacked generalization.
\newblock {\em Neural networks}, 5(2):241--259.

\bibitem[\protect\astroncite{Zhou}{2012}]{zhou2012ensemble}
Zhou, Z.-H. (2012).
\newblock {\em Ensemble methods: foundations and algorithms}.
\newblock Chapman and Hall/CRC.

\end{thebibliography}
\end{document}